\documentclass[11pt]{article}
\usepackage[utf8]{inputenc}
\usepackage{mystyle}

\begin{document}

\title{\huge Machine Learning for Variance Reduction in Online Experiments}

\author{Yongyi Guo\thanks{Department of Operations Research and Financial Engineering, Princeton University}
\quad\quad\quad Dominic Coey\thanks{Facebook}
\quad\quad\quad Mikael Konutgan$^\dagger$\\
Wenting Li$^\dagger$
\quad\quad\quad Chris Schoener $^\dagger$
\quad\quad\quad Matt Goldman$^\dagger$}


\maketitle

\begin{abstract}
We consider the problem of variance reduction in randomized controlled trials, through the use of covariates correlated with the outcome but independent of the treatment. We propose a machine learning regression-adjusted treatment effect estimator, which we call MLRATE. MLRATE uses machine learning predictors of the outcome to reduce estimator variance. It employs cross-fitting to avoid overfitting biases, and we prove consistency and asymptotic normality under general conditions. MLRATE is robust to poor predictions from the machine learning step: if the predictions are uncorrelated with the outcomes, the estimator performs asymptotically no worse than the standard difference-in-means estimator, while if predictions are highly correlated with outcomes, the efficiency gains are large. In A/A tests, for a set of 48 outcome metrics commonly monitored in Facebook experiments the estimator has over 70\% lower variance than the simple difference-in-means estimator, and about 19\% lower variance than the common univariate procedure which adjusts only for pre-experiment values of the outcome.
\end{abstract}

\section{Introduction}

While sample sizes are typically larger for online experiments than traditional field experiments, the desired minimum detectable effect sizes may be small, and the outcome variables of interest may be heavy-tailed. Even with quite large samples, statistical power may be low. Variance reduction methods play a key role in these settings, allowing for precise inferences with less data \citep{deng2013improving,taddy2016scalable, xie2016improving}. One common technique involves "adjusting" the simple difference-in-means estimator to account for covariate imbalances between the test and control groups \citep{deng2013improving, lin2013agnostic}, with the magnitude of the adjustment depending on both the magnitude of those imbalances, and the correlation between the covariates and the outcome of interest. If covariates are highly correlated with the outcome, then the treatment effect estimator's variance will decrease substantially.

Performing this adjustment procedure with the pre-experiment values of the outcome variable itself as the covariate can greatly reduce confidence interval (CI) width, if the outcome exhibits high autocorrelation. A natural question is how to adjust for multiple covariates, which may have a complicated nonlinear relationship with the outcome variable. Using many covariates in a machine learning (ML) model, it may be possible to develop a proxy highly correlated with the outcome variable and hence generate further variance reduction gains. This raises both statistical and scalability issues, however, as it is unclear how traditional justifications for linear regression adjustment with a fixed number of covariates translate to the case of general, potentially very complex ML methods, and it may not be scalable to generate new predictions every time an experiment's results are queried.

This paper makes three contributions. First, we propose an easy-to-implement and practical estimator which can take full advantage of ML methods to perform regression adjustment across a potentially large number of covariates, and derive its asymptotic properties. We name the procedure MLRATE, for ``machine learning regression-adjusted treatment effect estimator''. MLRATE uses cross-fitting (e.g.\ \cite{athey2020policy,chernozhukov2018double,newey2018cross}), which simplifies the asymptotic analysis and guarantees that the ``naive'' CIs which do not correct for the ML estimation step are asymptotically valid. We also ensure robustness of the estimator to poor quality predictions from the ML stage, by including those ML predictions as a covariate in a subsequent linear regression step. Our approach is agnostic or model-free in two key respects---we do not assume that the ML model converges to the truth, and in common with \cite{lin2013agnostic}, in the subsequent linear regression step we do not assume that the true conditional mean is linear. Second, we demonstrate that the method works well for online experiments in practice. Across a variety of metrics, the estimator reduces variance in A/A tests by around 19\% on average relative to regression adjustment for pre-experiment outcomes only. Some metrics see variance reduction of 50\% or more. Variance reduction of this magnitude can amount to the difference between experimentation being infeasibly noisy and being practically useful. Third, we sketch how the computational considerations involved in implementing MLRATE at scale can be surmounted.

\section{Outcome prediction for variance reduction}

\subsection{Setup \& motivation}
The data consist of a vector of covariates $X$, an outcome variable $Y$, and a binary treatment indicator $T$. The treatment is assigned randomly and independently of the covariates. For observations $i = 1,2,\ldots,N$, the vector $(Y_i, X_i, T_i)$ is drawn iid from a distribution $P$. To motivate our main estimator and illustrate some of the central ideas, first consider a ``difference-in-difference''-style estimator, where we train an ML model $g(X)$ predicting $Y$ from $X$, and then compute the difference between the test and control group averages of $Y - g(X)$. If we treat the estimated ML model $g$ as non-random and ignore its dependence on the sample, the resulting estimator has the same expectation as the usual difference-in-means estimator where we compute the difference between the test and control group averages of $Y$. This is because $g(X)$ and $T$ are independent, and hence $E[Y - g(X) \mid T = 1] - E[Y - g(X) \mid T = 0] = E[Y \mid T = 1] - E[Y \mid T = 0]$. Furthermore, if $g(X)$ is a good predictor of $Y$, then $Var(Y)$ will exceed $Var(Y - g(X))$, and the difference-in-difference estimator based on averages of $Y - g(X)$ will be lower variance than the difference-in-means estimator based on averages of $Y$.

MLRATE differs in two main respects from the heuristic argument above. First, instead of directly subtracting the ML predictions $g(X)$ from the outcome $Y$, we include them as a regressor in a subsequent linear regression step. This guarantees robustness of the estimator to poor, even asymptotically inconsistent predictions: regardless of how bad the outcome predictions from the ML step are, MLRATE has an asymptotic variance no larger than the difference-in-means estimator. Second, we use cross-fitting to estimate the predictive models, so that the predictions for every observation are generated by a model trained only on other observations. This allows us to control the randomness in the ML function ignored in the argument above. We derive the asymptotic distribution of this regression-adjusted estimator, and show that the usual, ``naive'' CIs for the average treatment effect (ATE), which ignore the randomness generated by estimating the predictive models, are in fact asymptotically valid. Thus asymptotically the ML step can only increase precision, and introduces no extra complications in computing CIs.


\subsection{Related work}
Our work is closely related to the large literature on semiparametric statistics and econometrics, in which low-dimensional parameters of interest are estimated in the presence of high-dimensional nuisance parameters \cite{bickel1982adaptive,robinson1988root,newey1990semiparametric,tsiatis2007semiparametric,van2011targeted,wager2016high}. A common approach in this literature appeals to Donsker conditions and empirical process theory to control the randomness generated by the estimation error in the nuisance function \cite{andrews1994empirical,van1996weak,van2000asymptotic,kennedy2016semiparametric}. This approach is less appealing in this context, as it would greatly restrict the kind of ML methods that could be used for the prediction step \citep{chernozhukov2018double}, and hence the variance reduction attainable. The idea of instead using sample-splitting in semiparametric problems---estimating nuisance parameters on one subset of data and evaluating them on another---dates back at least to \cite{bickel1982adaptive}, with subsequent contributions by \cite{schick1986asymptotically,klaassen1987consistent,bickel1993efficient}, among others. More recent applications of this idea, also referred to as ``cross-fitting'', include \cite{chernozhukov2018double,athey2020policy,belloni2012sparse,zheng2011cross}. This paper is especially similar in spirit to ``double machine-learning'' \citep{chernozhukov2018double}, which combines sample-splitting with the use of Neyman-orthogonal scores, which have the property of being insensitive to small errors in estimating the nuisance function. Although the results of \cite{chernozhukov2018double} do not directly carry over to our setting, we use similar arguments to establish our results. A second strand of related literature concerns ``agnostic'' regression adjustment, which delivers consistent estimates of the average treatment estimate even when the regression model is misspecified \cite{yang2001efficiency,freedman2008regression,lin2013agnostic,guo2020generalized,cohen2020noharm}. The procedure described in \cite{cohen2020noharm} is particularly relevant, as it shares the same structure of first estimating nonlinear models and then calibrating them in a linear regression step, although in contrast to their work we use sample-splitting to allow for a very general class of nonlinear ML models. A third strand of the literature considers improving estimator precision in the context of large-scale online experiments \cite{chapelle2012large,deng2013improving,coey2019improving,xie2016improving}.

Relative to these literatures, our contribution is to describe an estimator which i) delivers substantial variance reduction when good ML predictors of the outcome variable are available, and ii) performs well in the presence of poor-quality predictions, even allowing for predictive models which \textit{never} converge to the truth with infinite data. In particular, MLRATE is guaranteed to never perform worse, asymptotically, than the difference-in-means estimator, even with arbitrarily poor predictions. In contrast to low-dimensional regression adjustment, we give formal statistical guarantees on inference even when complex ML models are used to predict outcomes; in contrast to double-ML, as applied to randomized experiments, our proposed estimator need not be semiparametrically efficient, but allows for inconsistent estimates of the nuisance parameters. Finally, this methodology is practical and computationally efficient enough to be deployed at large scale, and we show with Facebook data that MLRATE can deliver substantial additional variance reduction beyond the existing state-of-the-art commonly used in practice, of linear regression adjustment for pre-experiment covariates \citep{deng2013improving,xie2016improving}.

\subsection{Estimation and inference with MLRATE}

The linear regression-adjusted estimator of the ATE is the OLS estimate of $\alpha_1$ in the regression
\begin{align}
    Y_i = \alpha_0 + \alpha_1 T_i + \alpha_2 X_i + \alpha_3 T_i (X_i - \overline{X}) + \epsilon_i, \label{eq:linearregadj}
\end{align}
where $\overline{X}$ is the average of $X_i$ over all $i$. The covariates $X_i$ may be multivariate, but are of fixed dimension that does not grow with the sample size. The analysis in \cite{lin2013agnostic} establishes that the OLS estimator for $\widehat{\alpha}_1$ is a consistent and asymptotically normal estimator of the ATE $E[Y \mid T = 1] - E[Y \mid T = 0]$, and the robust, Huber-White standard errors are asymptotically valid. In contrast to this setting, we wish to capture complex interactions and nonlinearities in the relationship between the outcome and covariates, and allow for a vector of covariates with dimension potentially increasing with the sample size. To this end, we propose the following procedure. We  assume throughout that $N$ is evenly divisible by $K$,  to simplify notation.

\begin{algorithm}[H]\label{alg-MLRATE}
\SetAlgoLined
\SetKwInOut{Input}{Input}
\SetKwInOut{Output}{Output}
\KwIn{Data $(Y_i, X_i, T_i)_{i = 1}^N$ split uniformly at random into $K$ equal-sized splits. $I_k:=$ index set of the $k$-th split and $I_k^c := \{1,2,\ldots,N\} \setminus I_k$, $\forall k$. $\mathcal{M}$, a supervised learning algorithm.
}
\KwResult{ML regression-adjusted ATE estimator $\widehat{\alpha}_1$, asymptotic variance $\widehat{\sigma}^2$.}
\For{$k\gets 1$ \KwTo $K$ }{
Generate the function $\widehat g_k$ predicting $Y_i$ given $X_i$, by applying  $\mathcal{M}$ to the sample $(Y_i, X_i)_{i \in I_k^c}$.
}
Compute $\overline{g} = \frac{1}{N}\sum_i \widehat{g}_{k(i)}(X_i)$, where $k(i):=$ the split index containing observation $i$\;
Compute $\widehat \alpha_1$ as the OLS estimator for $\alpha_1$ in 
	$Y_i = \alpha_0 + \alpha_1 T_i + \alpha_2 \widehat{g}_{k(i)}(X_i) + \alpha_3 T_i \left(\widehat{g}_{k(i)}(X_i) - \overline{g}\right) + \varepsilon_i$\;
Compute $\widehat{\sigma}^2$ according to \eqref{eq-estmvar}.
 \caption{Estimation and inference with MLRATE}
\end{algorithm}

Section \ref{subsec:asymptotic_behavior} proves the statistical validity of this estimation and inference procedure. Note that if the cross-fitted, random functions $\{\widehat{g}_{k}\}_{k = 1}^K$ were replaced by a single, fixed function $g$, MLRATE reduces to the standard linear regression-adjusted estimator. Instead the relation between the covariates $X_i$ and the outcome $Y_i$ is itself estimated from the data. Intuitively this should help with variance reduction, as the estimated proxy $\widehat{g}_{k(i)}(X_i)$ may be highly correlated with $Y_i$, but with the challenge that the dependence of the $\widehat{g}_k$'s on the data complicates the analysis of the statistical properties of the treatment effect estimator $\widehat{\alpha}_1$. Our main technical result assuages this concern, showing that the asymptotic distribution of $\widehat{\alpha}_1$ is not impacted and thus it is a consistent, asymptotically normal estimator of the ATE. CIs with level $100(1 - a)$ percent are given in the usual way by $\widehat{\alpha}_1 \pm \Phi^{-1}({1 - a/2})\widehat{\sigma}/\sqrt{N}$, where $\Phi$ is the CDF of the standard normal distribution. 

\begin{remark}
\textup{The chief purpose of cross-fitting is to avoid bias from overfitting. With sufficiently flexible ML models, in-sample predictions would be close to the outcomes $Y_i$. The linear regression step would then amount to adjusting for the outcome variable itself, which is correlated with the treatment, and this may introduce severe attenuation bias into estimates of the treatment effect. By generating predictions only on out-of-sample data, we ensure the adjustment covariate is independent of the treatment.}
\end{remark}
\begin{remark}
\textup{In online experiments, only a subset of users are typically assigned to any given experiment. To maximize training data and minimize compute costs, an equally valid variation on the above is to perform the cross-fitting ML step once, using data from all users, whereas the linear regression step must occur separately for every experiment of interest, using only the users in that experiment.}
\end{remark}
\begin{remark}
\textup{Alternatively, one may estimate a single ML model \emph{entirely} on pre-experiment data. For an experiment starting at time $t$, we may train a model predicting time $t - 1$ outcomes from time $t - 2$ covariates, and then use that model to predict time $t$ outcomes from time $t-1$ covariates. Those model predictions can then be treated as any other covariate, as they are entirely a function of pre-experiment data, and the results of \citep{lin2013agnostic} apply. Although simpler, this approach suffers from the drawbacks that it requires some history of the outcome metric to exist even pre-experiment, and that the predictive model may perform worse if the relationship between covariates and outcomes changes over time.}
\end{remark}


\begin{remark}\label{remark4}
\textup{
The choice of $K$ does not affect the asymptotic distribution of the estimator, although it may matter in finite samples. As \cite{chernozhukov2018double} note, in cross-fitting applications involving estimating high-dimensional nuisance functions with small samples, larger values of $K$ (e.g. $K=4$ or 5) may perform better. Much larger values of $K$ may be unattractive, however, given diminishing returns in model performance and the extra compute cost. In the simulations and empirical examples in Section \ref{sec:sims} we show that good performance is achievable even with the low computation choice of $K = 2$. 
}
\end{remark}
We now sketch the main technical result. Beyond standard regularity conditions, the main assumption is that for each split $k$, the estimated functions $\widehat{g}_k$ converge to some $g_0$ in the sense that $\int [\widehat{g}_k(X) - g_0(X)]^{4} dP \rightarrow_p 0$. This condition is quite weak in two aspects: On the one hand, it only requires convergence of the $\widehat{g}_k$ to $g_0$, and not convergence at a particular rate. Such consistency results are available for many common ML algorithms, including random forests (see \cite{athey2019generalized} and references therein), gradient boosted decision trees \citep{biau2021optimization}, deep feedforward neural nets \citep{farrell2021deep}, and regularized linear regression in some asymptotic regimes \citep{knight2000asymptotics}. On the other hand, we allow the ML modelling step to be misspecified and inconsistent: there is no requirement that $g_0(X) = E[Y \mid X]$, although a poorly-specified ML model may limit the variance reduction obtained. Allowing for inconsistent estimators is an especially important advantage in the presence of high-dimensional covariates, as in such settings there is no general guarantee that ML estimators will be consistent if the number of covariates grows faster than $\log(N)$, due to the curse of dimensionality \citep{stone1982optimal}.

To make explicit the dependence on the $\widehat{g}_k$'s, we denote MLRATE by $\widehat{\alpha}_1(\{\widehat{g}_k\}_{k = 1}^K)$. We denote by $\widehat{\alpha}_1(g_0)$ the linear regression adjustment estimator as in \eqref{eq:linearregadj} where we adjust for the covariate $g_0(X_i)$. This latter estimator is infeasible as $g_0$ is unknown, but we prove in Theorem \ref{thm:main_thm} below that the two estimators are asymptotically equivalent, i.e.\ $\sqrt{N} [ \widehat{\alpha}_1(\{\widehat{g}_k\}_{k = 1}^K) - \widehat{\alpha}_1(g_0) ] \rightarrow_p 0$. Deriving the asymptotic distribution of $\widehat{\alpha}_1(g_0)$ is straightforward, and from this equivalence we conclude that $\widehat{\alpha}_1(\{\widehat{g}_k\}_{k = 1}^K)$ shares the same asymptotic distribution.

\subsection{The asymptotic behavior of MLRATE} \label{subsec:asymptotic_behavior}

Define the covariate vector as a function of an arbitrary (possibly random) function $g$,  $Z(g) = (1, T, g(X), T g(X))^\top$, and define $Z_i(g) = (1, T_i, g(X_i), T_i g(X_i))^\top$ for $i=1, \ldots, N$. We adopt the notation $Pg = \int g dP $.\footnote{If the input function $\widehat{g}$ is random, the quantity $P\widehat{g}$ is also a random variable. If it is a deterministic function $g$, then $Pg$ is the same as the expectation $E[g(X)]$.} In what follows, matrix norms refer to the operator norm, and $\lambda_{min}(M)$ denotes the minimum eigenvalue of the symmetric matrix $M$. Recall that given the assumption of equally-sized splits, $N = Kn$. All proofs are in the Appendix.

\begin{assumption} \label{assumption:main}

i) $p\in (0, 1)$. ii) For all $k = 1, 2, \ldots, K$, the estimated functions $\widehat{g}_k$ belong to a vector space of functions $\mathcal{G}$ with probability one, with $\mathcal{G}$ satisfying $\sup_{g \in \mathcal{G}} P[|g|^{4+\delta}] < \infty$ for some $\delta > 0$. iii) $P[Y^4] < \infty$.  iv) For each $k = 1,2, \ldots, K$, $\widehat{g}_k$ converges to some function $g_0 \in \mathcal{G}$ in the sense that $\int [\widehat{g}_k(X) - g_0(X)]^4 dP \rightarrow_p 0$. v) $\inf_{g\in \mathcal{G}} Var(g(X)) > 0$. 

\end{assumption}

Condition i) is a standard assumption in randomized controlled trials, while conditions ii) and iii) are standard boundedness requirements. Condition iv) is the convergence assumption discussed above. Condition v) can be motivated with reference to the scientific question at hand: restricting attention only to adjustment functions which exhibit nontrivial variation with respect to the value of the covariate is unlikely to hurt the amount of variance reduction achieved. 

The following proposition ensures that the inverse of $P[Z(g)Z(g)^\top]$ exists for all $g$. 

\begin{proposition}\label{prop-assumption:main}
Given Assumption \ref{assumption:main}, $\inf_{g\in \mathcal{G}} \lambda_{min}(P[Z(g)Z(g)^\top]) > 0$.
\end{proposition}

Define
\begin{align}
    \widehat{\beta}(\{\widehat{g}_k\}_{k = 1}^K) = 
    \left[
        \frac{1}{N} \sum_k \sum_{i \in I_k}
        Z_i(\widehat{g}_k) Z_i(\widehat{g}_k)^\top
    \right]^{-1}
    \left[
        \frac{1}{N} \sum_k \sum_{i \in I_k}
        Z_i(\widehat{g}_k)
        Y_i
    \right],
\end{align}
and
\begin{align}
    \beta(\{\widehat{g}_k\}_{k = 1}^K) = 
        \left[ 
        \frac{1}{K} \sum_k
        P[Z(\widehat{g}_k) Z(\widehat{g}_k)^\top]
        \right]
        ^{-1}
    \left[
        \frac{1}{K} \sum_k 
        P[Z(\widehat{g}_k) Y]
    \right].
\end{align}
These are the sample and population OLS coefficients, from the regression of $Y_i$ on $Z_i(\widehat{g}_{k(i)})$. We also define the corresponding quantities for the limiting function $g_0$,
\begin{align}
    \widehat{\beta}(g_0) = 
    \left[
         \frac{1}{N} \sum_{i}
        Z_i(g_0) Z_i(g_0)^\top
    \right]^{-1}
    \left[
        \frac{1}{N} \sum_{i}
        Z_i(g_0)
        Y_i
    \right],
\end{align}
and
\begin{align}
    \beta(g_0) = \left[
        P[ Z(g_0) Z(g_0)^\top]\right]
        ^{-1}
    P[Z(g_0) Y].
\end{align}
The key intermediate step in deriving the asymptotic distribution of MLRATE is the following result, which states that the distribution of $\widehat{\beta}(\{\widehat{g}_k\}_{k = 1}^K)$, centered around the random variable $\beta(\{\widehat{g}_k\}_{k = 1}^K)$, is asymptotically equivalent to that of $\widehat{\beta}(g_0)$, centered around $\beta(g_0)$.
\begin{proposition}\label{prop:mainprop}
Under Assumption \ref{assumption:main},
    $$\sqrt{N} \left \lVert
    [\widehat{\beta}(\{\widehat{g}_k\}_{k = 1}^K) - \beta(\{\widehat{g}_k\}_{k = 1}^K)] - 
    [\widehat{\beta}(g_0) - \beta(g_0)] \right \rVert
    \rightarrow_p 0.$$
\end{proposition}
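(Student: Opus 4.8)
The plan is to reduce the claim to the statement that a cross-fitted ``empirical process'' remainder, centered at its own conditional mean, differs from the corresponding oracle remainder by $o_p(N^{-1/2})$. Write $A_N = \tfrac1N\sum_k\sum_{i\in I_k}Z_i(\widehat g_k)Z_i(\widehat g_k)^\top$, $B_N = \tfrac1N\sum_k\sum_{i\in I_k}Z_i(\widehat g_k)Y_i$, $\bar A = \tfrac1K\sum_k P[Z(\widehat g_k)Z(\widehat g_k)^\top]$, $\bar B = \tfrac1K\sum_k P[Z(\widehat g_k)Y]$, and similarly $\widetilde A_N,\widetilde B_N$ for $g_0$, with $A_0 = P[Z(g_0)Z(g_0)^\top]$, $B_0 = P[Z(g_0)Y]$. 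Using $\bar B = \bar A\,\beta(\{\widehat g_k\})$ and $B_0 = A_0\,\beta(g_0)$, elementary OLS algebra gives $\widehat\beta(\{\widehat g_k\}) - \beta(\{\widehat g_k\}) = A_N^{-1}M_N$ with $M_N := (B_N - \bar B) - (A_N - \bar A)\beta(\{\widehat g_k\})$, and likewise $\widehat\beta(g_0) - \beta(g_0) = \widetilde A_N^{-1}M_N^0$ with $M_N^0 := (\widetilde B_N - B_0) - (\widetilde A_N - A_0)\beta(g_0)$; note $M_N^0 = \tfrac1N\sum_i Z_i(g_0)(Y_i - Z_i(g_0)^\top\beta(g_0))$ is a mean-zero i.i.d.\ average. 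The target difference is then $A_N^{-1}(M_N - M_N^0) + (A_N^{-1} - \widetilde A_N^{-1})M_N^0$.

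Next I would collect the ``no rate needed'' consistency facts from Assumption~\ref{assumption:main} and Cauchy--Schwarz applied to $\widehat g_k - g_0$: the entries of $\bar A - A_0$ and $\bar B - B_0$ are bounded by powers of $\int(\widehat g_k - g_0)^4\,dP\to_p 0$ (the $L^{4+\delta}$ envelope on $\mathcal G$ and $P[Y^4]<\infty$ handling the cross terms), so $\bar A\to_p A_0$, $\bar B\to_p B_0$; a conditional Chebyshev bound, using that $\widehat g_k$ is independent of $\{(Y_i,X_i,T_i):i\in I_k\}$ by cross-fitting together with the $L^{4+\delta}$ envelope, upgrades these to $A_N\to_p A_0$, $\widetilde A_N\to_p A_0$ and gives $\sqrt N(A_N - \bar A)=O_p(1)$, $\sqrt N M_N^0 = O_p(1)$; Assumption~\ref{assumption:main}(iv) makes $A_0$ invertible, so $A_N^{-1},\widetilde A_N^{-1}\to_p A_0^{-1}$, $\beta(g_0)$ is a fixed finite vector, and $\beta(\{\widehat g_k\}) = \bar A^{-1}\bar B\to_p\beta(g_0)$. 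Since $(A_N^{-1} - \widetilde A_N^{-1})M_N^0 = A_N^{-1}(\widetilde A_N - A_N)\widetilde A_N^{-1}M_N^0 = o_p(1)\cdot O_p(N^{-1/2})$, that term is negligible, and because $A_N^{-1}=O_p(1)$ it remains to prove $\sqrt N(M_N - M_N^0)\to_p 0$.

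Writing $\beta(\{\widehat g_k\}) = \beta(g_0) + \delta$ with $\delta\to_p 0$, I would decompose
\[
M_N - M_N^0 = \big[(B_N - \bar B) - (\widetilde B_N - B_0)\big] \;-\; \big[(A_N - \bar A) - (\widetilde A_N - A_0)\big]\beta(g_0) \;-\; (A_N - \bar A)\,\delta .
\]
The last term is $O_p(N^{-1/2})\cdot o_p(1)$, so $\sqrt N$ times it vanishes. The first bracket equals $\tfrac1N\sum_k\sum_{i\in I_k}\{E_i^{(k)}Y_i - P[E^{(k)}Y]\}$ with $E_i^{(k)} := Z_i(\widehat g_k) - Z_i(g_0) = (0,0,r_k(X_i),T_i r_k(X_i))^\top$ and $r_k := \widehat g_k - g_0$; the second bracket is the analogous centered average of $Z_i(\widehat g_k)Z_i(\widehat g_k)^\top - Z_i(g_0)Z_i(g_0)^\top$, whose entries involve $r_k(X_i)$ and $r_k(X_i)(\widehat g_k + g_0)(X_i)$. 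For each fixed $k$, conditioning on $\widehat g_k$ makes the inner sum a centered i.i.d.\ average, and the conditional covariance of $N^{-1/2}$ times that sum is, up to constants, of the form $P[r_k(X)^2 h(X)]$ with $h$ equal to $Y^2$, $(\widehat g_k + g_0)^2(X)$, or $1$; by Cauchy--Schwarz this is at most $\big(\int (\widehat g_k - g_0)^4\,dP\big)^{1/2}$ times a factor that is $O_p(1)$ under Assumption~\ref{assumption:main}(i)--(ii), hence $o_p(1)$. Conditional Chebyshev plus bounded convergence then yields that $\sqrt N$ times each inner sum is $o_p(1)$, and summing the finitely many $k$ completes the proof.

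The step I expect to be the main obstacle is this middle one. Since Assumption~\ref{assumption:main}(iii) supplies only $\int(\widehat g_k - g_0)^4\,dP\to_p 0$ \emph{with no rate}, one cannot expand $\beta(\{\widehat g_k\})$ around $\beta(g_0)$ at rate $N^{-1/2}$, so the decomposition must be arranged so that $\beta(\{\widehat g_k\}) - \beta(g_0)$ is always multiplied by an $O_p(N^{-1/2})$ quantity; and the $\sqrt N$-smallness of the cross-fitted remainder relative to the oracle one has to be extracted purely from the vanishing of a conditional second moment, which is exactly where cross-fitting --- the independence of $\widehat g_k$ from the fold $I_k$ --- is indispensable.
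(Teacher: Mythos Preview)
Your argument is correct and reaches the conclusion, but the algebraic route differs from the paper's. The paper decomposes $\widehat\beta-\beta$ as $(A_N^{-1}-\bar A^{-1})B_N+\bar A^{-1}(B_N-\bar B)$ (and analogously for $g_0$), which forces it to control $\sqrt N\big[(A_N^{-1}-\bar A^{-1})-(\widetilde A_N^{-1}-A_0^{-1})\big]$; this is handled by a dedicated Neumann-series matrix-inverse perturbation lemma (the paper's Lemma~\ref{lem:inv}), together with a characteristic-function argument (Lemma~\ref{lem7.1}) to get $\sqrt N(A_N-\bar A)=O_p(1)$. You instead factor $\widehat\beta-\beta=A_N^{-1}M_N$ with $M_N$ the centered ``score'', so the inverse appears only as a pre-multiplier and you merely need $A_N^{-1}-\widetilde A_N^{-1}=o_p(1)$, which is immediate from consistency and continuous mapping. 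Your conditional-Chebyshev bound using the uniform $L^{4+\delta}$ envelope suffices for the $O_p(1)$ rate (the paper's characteristic-function lemma in fact proves more than is needed there). The heart of both proofs is the same cross-fitting device: for each fold, conditioning on $I_k^c$ makes the centered differences $\sum_{i\in I_k}\{Z_i(\widehat g_k)Z_i(\widehat g_k)^\top-Z_iZ_i^\top-\text{(centering)}\}$ and $\sum_{i\in I_k}\{Z_i(\widehat g_k)Y_i-Z_iY_i-\text{(centering)}\}$ into i.i.d.\ sums whose conditional variance is bounded by $\sqrt{P[(\widehat g_k-g_0)^4]}$ times an $O_p(1)$ factor, hence $o_p(1)$ --- these are exactly the paper's Steps~1 and~8. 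The trade-off: your route is leaner and avoids both auxiliary lemmas, at the cost of needing the extra (easy) observation $\beta(\{\widehat g_k\})\to_p\beta(g_0)$; the paper's decomposition keeps the $A$- and $B$-blocks separate throughout and never invokes consistency of $\beta(\{\widehat g_k\})$.
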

Having established Proposition \ref{prop:mainprop}, we turn to the limiting distribution of MLRATE. This is not quite immediate: the estimator is defined as the coefficient on $T_i$ in the regression of $Y_i$ on a constant, $T_i$, $\widehat{g}_{k(i)}(X_i)$, and $T_i (\widehat{g}_{k(i)} - \overline{g})$. By contrast, Proposition \ref{prop:mainprop} concerns the regression of $Y_i$ on a constant, $T_i$, $\widehat{g}_{k(i)}(X_i)$, and $T_i \widehat{g}_{k(i)}$. To conclude the argument we write MLRATE in terms of the coefficients from the latter regression, and apply Proposition \ref{prop:mainprop}. MLRATE, $\widehat{\alpha}_1(\{\widehat{g}_k\}_{k = 1}^K)$, can be written as
\begin{align}
\widehat{\alpha}_1(\{\widehat{g}_k\}_{k = 1}^K) = \widehat{\beta_1}(\{\widehat{g}_k\}_{k = 1}^K) + \widehat{\beta_3}(\{\widehat{g}_k\}_{k = 1}^K)  \frac{1}{N} \sum_i \widehat{g}_{k(i)}(X_i).
\end{align}
We also define $\widehat{\alpha}_1(g_0)$, the corresponding quantity using the unknown $g_0$:
\begin{align}
\widehat{\alpha}_1(g_0) = \widehat{\beta_1}(g_0) + \widehat{\beta_3}(g_0) \frac{1}{N} \sum_i g_0(X_i).
\label{eq:alphabetahat}
\end{align}
The ATE $\alpha_1$ satisfies
\begin{equation}
\alpha_1 = \beta_1(g_0) + \beta_3(g_0)  P g_0
= \beta_1(\{\widehat{g}_k\}_{k = 1}^K) + \beta_3(\{\widehat{g}_k\}_{k = 1}^K)  \frac{1}{K}\sum_{k = 1}^K P \widehat{g}_{k}. \label{eq-ate-coef2}
\end{equation}
Here $\beta_i(\{\widehat g_k\}_{k=1}^K)$ denotes the $i$-th entry of $\beta(\{\widehat g_k\}_{k=1}^K)$; $\beta_i(g_0)$ and $\widehat \beta_i(\{\widehat g_k\}_{k=1}^K)$ are similarly defined. To see why  \eqref{eq-ate-coef2} hold, note that
$\alpha_1 = \beta_1(g) + \beta_3(g)  P g$ holds for \emph{any} function $g$: this is essentially a restatement of the observation that regardless of the particular covariate we adjust for, the regression-adjusted estimator will still be consistent for the ATE \cite{yang2001efficiency,tsiatis2008covariate}. This argument resembles the idea of Neyman orthogonality (\cite{neyman1959optimal,chernozhukov2018double}), where the estimate of the parameter of interest is not heavily influenced by an undesirable estimate of the nuisance function.

We now state our main theorem, which asserts that the randomness from the ML function fitting step in MLRATE does not affect its asymptotic distribution.
\begin{theorem}
Under Assumption \ref{assumption:main}, $$\sqrt{N} \left[ \widehat{\alpha}_1(\{\widehat{g}_k\}_{k = 1}^K) - \widehat{\alpha}_1(g_0) \right] \rightarrow_p 0.$$ Consequently $\sqrt{N} \left[ \widehat{\alpha}_1(\{\widehat{g}_k\}_{k = 1}^K) - \alpha_1 \right] $ and $\sqrt{N} \left[ \widehat{\alpha}_1(g_0) - \alpha_1 \right]$ are asymptotically equivalent.
\label{thm:main_thm}
\end{theorem}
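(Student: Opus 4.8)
The plan is to deduce Theorem \ref{thm:main_thm} from Proposition \ref{prop:mainprop} by a short chain of reductions, the key one being to rewrite MLRATE in terms of the ``uncentered'' regression coefficients to which the proposition directly applies. First I would record the algebraic identities
\[
\widehat{\alpha}_1(\{\widehat{g}_k\}) = \widehat{\beta}_1(\{\widehat{g}_k\}) + \widehat{\beta}_3(\{\widehat{g}_k\})\,\tfrac{1}{N}\sum_i \widehat{g}_{k(i)}(X_i), \qquad
\widehat{\alpha}_1(g_0) = \widehat{\beta}_1(g_0) + \widehat{\beta}_3(g_0)\,\tfrac{1}{N}\sum_i g_0(X_i),
\]
together with the population analogue \eqref{eq-ate-coef2}, $\alpha_1 = \beta_1(g_0) + \beta_3(g_0)\,Pg_0 = \beta_1(\{\widehat{g}_k\}) + \beta_3(\{\widehat{g}_k\})\,\tfrac{1}{K}\sum_k P\widehat{g}_k$. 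Subtracting, the target quantity $\sqrt{N}[\widehat{\alpha}_1(\{\widehat{g}_k\}) - \widehat{\alpha}_1(g_0)]$ decomposes into a piece coming from $\widehat{\beta}_1 - \beta_1$ (for both arguments), a piece from $\widehat{\beta}_3 - \beta_3$ multiplied by sample averages of the proxies, and cross terms. The plan is to show each piece is $o_p(1)$.

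The main workhorse is Proposition \ref{prop:mainprop}, which gives $\sqrt{N}\bigl\|[\widehat{\beta}(\{\widehat{g}_k\}) - \beta(\{\widehat{g}_k\})] - [\widehat{\beta}(g_0) - \beta(g_0)]\bigr\| \to_p 0$; in particular the first and third coordinates satisfy the same. Writing $\widehat{\alpha}_1(\{\widehat{g}_k\}) - \alpha_1 = [\widehat{\beta}_1(\{\widehat{g}_k\}) - \beta_1(\{\widehat{g}_k\})] + [\widehat{\beta}_3(\{\widehat{g}_k\}) - \beta_3(\{\widehat{g}_k\})]\cdot\tfrac{1}{K}\sum_k P\widehat{g}_k + \beta_3(\{\widehat{g}_k\})\bigl(\tfrac{1}{N}\sum_i \widehat{g}_{k(i)}(X_i) - \tfrac{1}{K}\sum_k P\widehat{g}_k\bigr) + \widehat{\beta}_3(\{\widehat{g}_k\})\cdot 0$ — actually it is cleaner to group as $[\widehat{\beta}_3 - \beta_3]\cdot\tfrac{1}{N}\sum_i\widehat{g}_{k(i)}(X_i) + \beta_3\cdot(\tfrac{1}{N}\sum_i\widehat{g}_{k(i)}(X_i) - \tfrac1K\sum_k P\widehat{g}_k)$ — and doing the analogous expansion for $g_0$, one matches terms. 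The coordinate-wise conclusion of Proposition \ref{prop:mainprop} handles the $\widehat{\beta}_1$ terms outright and the $[\widehat{\beta}_3 - \beta_3]$ terms once we argue the multipliers $\tfrac{1}{N}\sum_i\widehat{g}_{k(i)}(X_i)$ are $O_p(1)$ (uniform boundedness of $P|g|^{4+\delta}$ over $\mathcal{G}$ from Assumption \ref{assumption:main}(i), plus a law of large numbers / Markov bound, gives this, and also that the difference of this average from $\tfrac1K\sum_k P\widehat g_k$ is $o_p(1)$, in fact $O_p(N^{-1/2})$ conditionally on the $\widehat g_k$). The remaining $\beta_3$-type term requires comparing $\sqrt N(\tfrac1N\sum_i\widehat g_{k(i)}(X_i) - \tfrac1K\sum_k P\widehat g_k)$ against $\sqrt N(\tfrac1N\sum_i g_0(X_i) - Pg_0)$ and showing the difference is $o_p(1)$: conditionally on the data used to fit the $\widehat g_k$, each is a centered average, and Assumption \ref{assumption:main}(iii) ($L^4$, hence $L^2$, convergence of $\widehat g_k$ to $g_0$) bounds the conditional variance of the difference by $\tfrac1n\int(\widehat g_k - g_0)^2 dP = o_p(1/n)$, whence the scaled difference is $o_p(1)$ by Chebyshev and the conditional-to-unconditional argument already used in the proof of Proposition \ref{prop:mainprop}.

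I expect the main obstacle to be bookkeeping rather than a deep new idea: one must carefully track the centering terms, since MLRATE uses $T_i(\widehat g_{k(i)}(X_i) - \overline g)$ whereas Proposition \ref{prop:mainprop} is stated for the regression on $T_i\widehat g_{k(i)}(X_i)$, so the passage between the two regressions (the algebra establishing the displayed formula for $\widehat\alpha_1$ in terms of $\widehat\beta_1,\widehat\beta_3$) must be done cleanly, and one must ensure the population quantities $\beta(\{\widehat g_k\})$ — which are themselves random through the $\widehat g_k$ — are handled via conditioning. A secondary technical point is justifying the $O_p(1)$ control of $\widehat\beta_3(\{\widehat g_k\})$ and of the proxy sample averages uniformly enough to multiply into $o_p(1)$ terms; this follows from Assumption \ref{assumption:main}(i) and (iv) (the latter bounding the inverse Gram matrix uniformly over $\mathcal G$) combined with Assumption \ref{assumption:main}(ii), and is essentially already available from the proof of Proposition \ref{prop:mainprop}. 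Once these pieces are assembled, $\sqrt N[\widehat\alpha_1(\{\widehat g_k\}) - \widehat\alpha_1(g_0)] \to_p 0$ follows, and the final sentence of the theorem is then immediate by adding and subtracting $\alpha_1$ and applying Slutsky.
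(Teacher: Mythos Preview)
Your proposal is correct and follows essentially the same route as the paper: both decompose $\widehat{\alpha}_1(\{\widehat g_k\}) - \widehat{\alpha}_1(g_0)$ via the identities relating $\widehat\alpha_1$ to $\widehat\beta_1,\widehat\beta_3$ and \eqref{eq-ate-coef2}, yielding a $\widehat\beta_1$-difference handled directly by Proposition~\ref{prop:mainprop}, a $[\widehat\beta_3-\beta_3]\times(\text{sample average})$ piece, and a $\beta_3\times(\text{centered average})$ piece, with the latter controlled by the conditional-variance argument you describe. The only places where you are slightly looser than the paper are the two cross terms arising because the multipliers differ between the $\widehat g_k$ and $g_0$ versions---namely $(\widehat\beta_3(g_0)-\beta_3(g_0))\cdot\tfrac1N\sum_i(\widehat g_{k(i)}(X_i)-g_0(X_i))$ and $(\beta_3(\{\widehat g_k\})-\beta_3(g_0))\cdot\tfrac1N\sum_i(g_0(X_i)-Pg_0)$---but you have all the needed ingredients ($O_p(N^{-1/2})$ from the CLT, $o_p(1)$ from $L^2$-convergence and continuity of $\beta(\cdot)$), and you correctly flag this as bookkeeping.
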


Given Theorem \ref{thm:main_thm}, the problem of finding the asymptotic distribution of MLRATE reduces to finding the asymptotic distribution of $\widehat{\alpha}_1(g_0)$. The latter, summarized in the following proposition, can be established by standard asymptotic arguments, and is already known in the literature.\footnote{See, for example, equation (10) in \cite{yang2001efficiency}. Note that there is a small typo in that display: it should read 
$\Sigma_2 = \frac{1}{1 - \delta} \sigma^{(0)}_{22} + \frac{1}{\delta}\sigma^{(1)}_{22} - \frac{1}{\delta(1 - \delta)\sigma_{11}}\left\{ (1 - \delta) \sigma^{(1)}_{12} + \delta\sigma^{(0)}_{12} \right\}^2$
instead of $\Sigma_2 = \frac{1}{1 - \delta} \sigma^{(0)}_{22} + \frac{1}{\delta}\sigma^{(1)}_{22} - \frac{1}{\delta(1 - \delta)\sigma_{11}}\left\{ (1 - \delta) \sigma^{(1)}_{12} + \delta\sigma^{(0)}_{22} \right\}^2$.} Define $p = E(T)$, $\sigma^2_{g} = Var(g_0(X_i))$, $\sigma^2_{Y_C} = Var(Y_i \mid T_i = 0)$, and $\sigma^2_{Y_T} = Var(Y_i \mid T_i = 1)$. For notational convenience, below we use $\beta_{0,i}$ to denote $\beta_i(g_0)$ for each $i$.
\begin{proposition}
If $E(g_0(X)^2) < \infty $, $E(Y^2) < \infty$, and $0 < p < 1$, then $\sqrt{N} \left[ \widehat{\alpha}_1(g_0) - \alpha_1 \right] \rightsquigarrow \mathcal{N}(0, \sigma^2)$, where 
\begin{equation}
\sigma^2 = \frac{\sigma^2_{Y_C}}{1 - p} + \frac{\sigma^2_{Y_T}}{p} - \frac{\sigma^2_{g}}{p(1 - p)} \left[\beta_{0,2} p + (\beta_{0,2} + \beta_{0,3}) (1 - p) \right]^2.
\end{equation}
\label{prop:var}
\end{proposition}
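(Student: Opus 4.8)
\textbf{Proof proposal for Proposition \ref{prop:var}.} The plan is to exploit the binary structure of $T$ and its independence from $X$ to reduce $\widehat\alpha_1(g_0)$ to a smooth function of within-arm sample means, linearize, and then compute a variance. Write $W_i = g_0(X_i)$, $\mu_W = E[W_i]$, $\mu_{Y_j} = E[Y_i \mid T_i = j]$, and let $N_T = \sum_i T_i$, $N_C = N - N_T$; let $\bar Y_T, \bar W_T$ (resp. $\bar Y_C, \bar W_C$) be the treatment- (resp. control-) group averages of $Y$ and $W$, and $\bar W = \frac1N\sum_i W_i$. As noted below \eqref{eq:alphabetahat}, $\widehat\alpha_1(g_0) = \widehat\beta_1(g_0) + \widehat\beta_3(g_0)\bar W$ is precisely the coefficient on $T_i$ in the interacted regression \eqref{eq:linearregadj} run with covariate $W_i$; by standard algebra (cf.\ \cite{lin2013agnostic}) this equals $\widehat\tau_T - \widehat\tau_C$, where $\widehat\tau_j = \bar Y_j - \widehat b_j(\bar W_j - \bar W)$ and $\widehat b_j$ is the within-arm OLS slope of $Y$ on $W$. (If $\sigma_g^2 = \mathrm{Var}(W)=0$ the adjustment terms vanish, the estimator is difference-in-means, and the claim is classical; so assume $\sigma_g^2 > 0$, under which $\widehat b_j$ is eventually well defined.)

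First I would replace $\widehat b_j$ by its population limit $b_j = \mathrm{Cov}(Y,W \mid T=j)/\mathrm{Var}(W\mid T=j)$. The WLLN (using $E[Y^2], E[W^2] < \infty$, $\sigma_g^2>0$) gives $\widehat b_j \to_p b_j$; and since $T \perp X$ we have $E[W\mid T=1] = E[W\mid T=0] = \mu_W$, so both imbalances $\bar W_T - \bar W$, $\bar W_C - \bar W$ are $O_p(N^{-1/2})$. Hence $\sqrt N(\widehat b_j - b_j)(\bar W_j - \bar W) = o_p(1)$, so $\sqrt N[\widehat\alpha_1(g_0) - \alpha_1]$ coincides, up to $o_p(1)$, with its version in which $\widehat b_j$ is replaced by the constant $b_j$. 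Linearizing the five sample means (the random denominators $N_j/N \to_p p, 1-p$ contribute only a Slutsky adjustment) then yields $\sqrt N[\widehat\alpha_1(g_0) - \alpha_1] = \frac{1}{\sqrt N}\sum_{i=1}^N \psi_i + o_p(1)$ with
\begin{equation*}
\psi_i = \frac{T_i}{p}\bigl[(Y_i - \mu_{Y_T}) - b_1(W_i - \mu_W)\bigr] - \frac{1 - T_i}{1 - p}\bigl[(Y_i - \mu_{Y_C}) - b_0(W_i - \mu_W)\bigr] + (b_1 - b_0)(W_i - \mu_W),
\end{equation*}
where one checks $E[\psi_i] = 0$ (each arm-bracket has conditional mean $0$ given the corresponding value of $T_i$, and $E[W_i - \mu_W] = 0$). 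As the $\psi_i$ are iid with finite variance, Lindeberg--L\'evy gives $\sqrt N[\widehat\alpha_1(g_0) - \alpha_1] \rightsquigarrow \mathcal{N}\bigl(0, \mathrm{Var}(\psi_1)\bigr)$.

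It remains to evaluate $\mathrm{Var}(\psi_1) = E[\psi_1^2]$. Conditioning on $T_1$, and using that the two arm-brackets are never simultaneously nonzero,
\begin{equation*}
E[\psi_1^2] = \tfrac1p\,\mathrm{Var}\bigl(Y - b_1 W \mid T=1\bigr) + \tfrac{1}{1-p}\,\mathrm{Var}\bigl(Y - b_0 W \mid T=0\bigr) + (b_1 - b_0)^2\sigma_g^2 + (\text{two cross terms}),
\end{equation*}
where each cross term is a multiple of $\mathrm{Cov}(Y - b_j W, W \mid T=j)$ and hence vanishes by the normal equation defining $b_j$. Since $W \perp T$ gives $\mathrm{Var}(W\mid T=j) = \sigma_g^2$, we get $\mathrm{Var}(Y - b_j W\mid T=j) = \sigma^2_{Y_j} - b_j^2\sigma_g^2$; substituting the within-arm slopes read off from $\beta(g_0)$, namely $b_0 = \beta_{0,2}$ and $b_1 = \beta_{0,2} + \beta_{0,3}$, and expanding the square shows $\frac{b_1^2}{p} + \frac{b_0^2}{1-p} - (b_1-b_0)^2 = \frac{1}{p(1-p)}\bigl[\beta_{0,2}p + (\beta_{0,2}+\beta_{0,3})(1-p)\bigr]^2$. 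This turns $E[\psi_1^2]$ into exactly the stated $\sigma^2$.

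The main obstacle is the first reduction step: since only second moments of $Y$ and $g_0(X)$ are assumed, one cannot invoke a generic OLS asymptotic-normality theorem, which would require $E\bigl[\lVert Z(g_0)\rVert^2 (Y - Z(g_0)^\top\beta(g_0))^2\bigr] < \infty$ (roughly a fourth-moment condition). The within-arm decomposition combined with the $O_p(N^{-1/2})$ bound on the covariate imbalances is precisely what allows the linearization to go through under second moments only; everything after that is a routine CLT-and-moment computation.
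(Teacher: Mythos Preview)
Your argument is correct. The paper does not actually prove Proposition~\ref{prop:var}; it states that the result ``can be established by standard asymptotic arguments, and is already known in the literature,'' citing \cite{yang2001efficiency} (see the footnote after the proposition). There is therefore no paper proof to compare against, and what you have written is exactly the kind of standard derivation the paper has in mind.

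All the ingredients check out. The representation $\widehat\alpha_1(g_0) = \widehat\tau_T - \widehat\tau_C$ with $\widehat\tau_j = \bar Y_j - \widehat b_j(\bar W_j - \bar W)$ follows from the algebra of the fully interacted regression (as in \cite{lin2013agnostic}); freezing the slopes $\widehat b_j$ at their probability limits $b_j$ is legitimate because the covariate imbalances $\bar W_j - \bar W$ are $O_p(N^{-1/2})$ while $\widehat b_j - b_j = o_p(1)$ under second moments; the linearization of the ratio-type arm means to obtain the stated $\psi_i$ is the usual delta-method step (the remainder being $O_p(N^{-1})$); the cross terms in $E[\psi_1^2]$ vanish by the normal equations defining $b_j$; and the identification $b_0 = \beta_{0,2}$, $b_1 = \beta_{0,2}+\beta_{0,3}$ is correct because the population regression of $Y$ on $(1,T,W,TW)$ fits separate within-arm linear projections. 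The algebraic identity
\[
\frac{b_1^2}{p} + \frac{b_0^2}{1-p} - (b_1 - b_0)^2 \;=\; \frac{[b_0 p + b_1(1-p)]^2}{p(1-p)}
\]
is a direct expansion. Your closing observation is also well taken: the decomposition reduces the problem to a CLT for iid mean-zero terms with finite variance, so only the stated second-moment assumptions are needed, whereas a direct sandwich-formula derivation from the OLS normal equations would implicitly require $E[\lVert Z(g_0)\rVert^2 \varepsilon^2]<\infty$.
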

Putting together the previous results, we arrive at the asymptotic distribution for MLRATE, which is asymptotically normal and centered around the ATE $\alpha_1$. 
\begin{corollary}
Under Assumption \ref{assumption:main},
$\sqrt{N} \left[ \widehat{\alpha}_1(\{\widehat{g}_k\}_{k = 1}^K) - \alpha_1 \right] \rightsquigarrow \mathcal{N}(0, \sigma^2)$, where
\begin{equation}
\sigma^2 = \frac{\sigma^2_{Y_C}}{1 - p} + \frac{\sigma^2_{Y_T}}{p} - \frac{\sigma^2_{g}}{p(1 - p)} \left[\beta_{0,2} p + (\beta_{0,2} + \beta_{0,3}) (1 - p) \right]^2. \label{eq:var_formula}
\end{equation}
\end{corollary}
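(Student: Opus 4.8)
The plan is to read the Corollary off from Theorem~\ref{thm:main_thm} and Proposition~\ref{prop:var} via Slutsky's theorem; since those two deep ingredients are already in hand, there is no real obstacle, and the remaining work is essentially bookkeeping. The one place to be slightly careful is verifying that Assumption~\ref{assumption:main} supplies the hypotheses of Proposition~\ref{prop:var}---in particular that the eigenvalue condition forces $0 < p < 1$, and that the limit $g_0$ inherits the moment bound of Assumption~\ref{assumption:main}(i).

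First I would check those hypotheses. By Assumption~\ref{assumption:main}(iii) the limit function $g_0$ belongs to $\mathcal{G}$, so Assumption~\ref{assumption:main}(i) yields $P[|g_0|^{4+\delta}] < \infty$ and hence $E[g_0(X)^2] < \infty$ (so that $\sigma_g^2$ and the coefficients $\beta_{0,2}, \beta_{0,3}$ appearing in \eqref{eq:var_formula} are finite); Assumption~\ref{assumption:main}(ii) gives $P[Y^4] < \infty$ and hence $E[Y^2] < \infty$. To obtain $0 < p < 1$ I would argue by contradiction from Assumption~\ref{assumption:main}(iv): if $p = 0$ then $T = 0$ almost surely, so $Z(g) = (1, 0, g(X), 0)^\top$ and $P[Z(g)Z(g)^\top]$ has its second and fourth rows and columns identically zero, forcing $\lambda_{min}(P[Z(g)Z(g)^\top]) = 0$; the case $p = 1$ is symmetric. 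Either possibility contradicts $\inf_{g\in\mathcal{G}} \lambda_{min}(P[Z(g)Z(g)^\top]) > 0$, so $0 < p < 1$.

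Then I would combine the pieces. Proposition~\ref{prop:var} gives $\sqrt{N}[\widehat{\alpha}_1(g_0) - \alpha_1] \rightsquigarrow \mathcal{N}(0,\sigma^2)$ with $\sigma^2$ as in \eqref{eq:var_formula}, and Theorem~\ref{thm:main_thm} gives $\sqrt{N}[\widehat{\alpha}_1(\{\widehat{g}_k\}_{k=1}^K) - \widehat{\alpha}_1(g_0)] \rightarrow_p 0$. Decomposing
\[
\sqrt{N}\bigl[\widehat{\alpha}_1(\{\widehat{g}_k\}_{k=1}^K) - \alpha_1\bigr] = \sqrt{N}\bigl[\widehat{\alpha}_1(g_0) - \alpha_1\bigr] + \sqrt{N}\bigl[\widehat{\alpha}_1(\{\widehat{g}_k\}_{k=1}^K) - \widehat{\alpha}_1(g_0)\bigr],
\]
the first summand converges in distribution to $\mathcal{N}(0,\sigma^2)$ while the second converges to $0$ in probability, so Slutsky's theorem delivers $\sqrt{N}[\widehat{\alpha}_1(\{\widehat{g}_k\}_{k=1}^K) - \alpha_1] \rightsquigarrow \mathcal{N}(0,\sigma^2)$, as claimed.
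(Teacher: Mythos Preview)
Your proposal is correct and matches the paper's approach: the paper does not give a separate proof of the Corollary at all, simply stating that it follows by ``putting together the previous results,'' i.e.\ Theorem~\ref{thm:main_thm} and Proposition~\ref{prop:var}. Your Slutsky decomposition is exactly this, and your explicit check that Assumption~\ref{assumption:main} implies the hypotheses of Proposition~\ref{prop:var} (in particular the contradiction argument for $0<p<1$ from condition~(iv)) is a detail the paper leaves implicit.
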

It follows directly from this corollary that the asymptotic variance of MLRATE is smaller than variance of the simple difference-in-means estimator by the amount
$$\frac{\sigma^2_{g}}{p(1 - p)} \left[\beta_{0,2} p + (\beta_{0,2} + \beta_{0,3}) (1 - p) \right]^2
\ge 0.$$
Thus ML regression adjustment, like ordinary linear regression adjustment \cite{yang2001efficiency,lin2013agnostic}, cannot reduce asymptotic precision. For some intuition about the determinants of variance reduction, consider the special case where $\beta_{0,3} = 0$ (i.e.\ the slope of the best-fitting linear relationship between $Y$ and $g_0(X)$ does not vary from test to control groups), and $\sigma^2_{Y_C} = \sigma^2_{Y_T}$. The unadjusted, difference-in-means estimator has asymptotic variance $\sigma^2_{Y_C}/(1 - p) + \sigma^2_{Y_T} / p$. The relative efficiency of the adjusted estimator, $\sigma^2 / [\sigma^2_{Y_C}/(1 - p) + \sigma^2_{Y_T} / p]$, equals $1 - Corr(Y, g_0(X))^2$. If $Corr(Y, g_0(X)) = 0.5$, regression adjustment shrinks CIs by $1 - \sqrt{1 - 0.5^2} = 13.4\%$; with a correlation of 0.8, they are 40\% smaller.

The following proposition shows that the sample analog of \eqref{eq:var_formula} is a consistent estimator of the asymptotic variance, and it can thus be used to construct  asymptotically valid CIs. 
\begin{proposition} \label{prop:var_estimator}
Let $\widehat{\sigma}^2$ be the sample analog of $\sigma^2$, that is,
\begin{align}\label{eq-estmvar}
\widehat{\sigma}^2  = &
\frac{\widehat{Var}(Y_i \mid T_i = 0)}{1 - \widehat{p}} + 
\frac{\widehat{Var}(Y_i \mid T_i = 1)}{\widehat{p}}\\
& - 
\frac{\widehat{Var}(\widehat{g}_{k(i)}(X_i))}{\widehat{p}(1 - \widehat{p})}\left[
\widehat{\beta}_2(\{\widehat{g}_k\}_{k = 1}^K) \widehat{p} + 
\left (\widehat{\beta}_2(\{\widehat{g}_k\}_{k = 1}^K) + \widehat{\beta}_3(\{\widehat{g}_k\}_{k = 1}^K) \right) (1 - \widehat{p})
\right]^2,
\end{align}
where $\widehat{p} = \sum_i T_i / N$. Under Assumption \ref{assumption:main}, $\widehat{\sigma}^2 \rightarrow_p \sigma^2$.
\end{proposition}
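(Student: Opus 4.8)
The plan is to write $\widehat\sigma^2$ as a fixed continuous function of a short list of sample quantities, namely $\widehat p$, $\widehat{Var}(Y_i\mid T_i=0)$, $\widehat{Var}(Y_i\mid T_i=1)$, $\widehat{Var}(\widehat g_{k(i)}(X_i))$, $\widehat\beta_2(\{\widehat g_k\}_{k=1}^K)$ and $\widehat\beta_3(\{\widehat g_k\}_{k=1}^K)$; to show that each of these converges in probability to the corresponding population object appearing in \eqref{eq:var_formula}; and then to conclude by the continuous mapping theorem, noting that \eqref{eq-estmvar} is continuous in these arguments on the region $\{0<p<1\}$, which together with Assumption \ref{assumption:main} iv) keeps all the relevant Gram matrices uniformly invertible. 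The inputs that do not involve the estimated functions are immediate: $\widehat p=\frac1N\sum_iT_i\rightarrow_p p$ by the weak law of large numbers, and since $0<p<1$ both treatment arms eventually have sample sizes proportional to $N$, so using $P[Y^4]<\infty$ (hence finite conditional second moments) the within-arm sample means of $Y_i$ and $Y_i^2$ converge to their conditional expectations, giving $\widehat{Var}(Y_i\mid T_i=0)\rightarrow_p\sigma^2_{Y_C}$ and $\widehat{Var}(Y_i\mid T_i=1)\rightarrow_p\sigma^2_{Y_T}$.

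For the OLS coefficients I would invoke Proposition \ref{prop:mainprop}, which gives $\widehat\beta(\{\widehat g_k\}_{k=1}^K)-\beta(\{\widehat g_k\}_{k=1}^K)=[\widehat\beta(g_0)-\beta(g_0)]+o_p(N^{-1/2})$. Combining this with the ordinary OLS consistency $\widehat\beta(g_0)\rightarrow_p\beta(g_0)$ — the weak law of large numbers plus continuity of matrix inversion, valid since $\lambda_{min}(P[Z(g_0)Z(g_0)^\top])>0$ by Assumption \ref{assumption:main} iv) — and with the convergence $\beta(\{\widehat g_k\}_{k=1}^K)\rightarrow_p\beta(g_0)$ yields $\widehat\beta_j(\{\widehat g_k\}_{k=1}^K)\rightarrow_p\beta_{0,j}$ for $j=2,3$. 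The last convergence follows from Assumption \ref{assumption:main} iii) together with the uniform moment bound in i): each entry of $\frac1K\sum_kP[Z(\widehat g_k)Z(\widehat g_k)^\top]$ and of $\frac1K\sum_kP[Z(\widehat g_k)Y]$ differs from the corresponding entry at $g_0$ by a Cauchy--Schwarz term such as $|P[\widehat g_k^2]-P[g_0^2]|\le\lVert\widehat g_k-g_0\rVert_{L^2}\lVert\widehat g_k+g_0\rVert_{L^2}$ or $|P[(\widehat g_k-g_0)Y]|\le\lVert\widehat g_k-g_0\rVert_{L^2}\lVert Y\rVert_{L^2}$, where the first factor $\rightarrow_p0$ by iii) and Lyapunov's inequality and the second is bounded by i)--ii), after which matrix inversion is continuous uniformly over a set $\{M:\lambda_{min}(M)\ge c\}$.

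For the variance of the cross-fitted proxy I would write $\widehat{Var}(\widehat g_{k(i)}(X_i))=\frac1N\sum_i\widehat g_{k(i)}(X_i)^2-\bigl(\frac1N\sum_i\widehat g_{k(i)}(X_i)\bigr)^2$ and, for $r\in\{1,2\}$, split the average over folds as $\frac1N\sum_i\widehat g_{k(i)}(X_i)^r=\frac1K\sum_k\bigl(\frac1n\sum_{i\in I_k}\widehat g_k(X_i)^r\bigr)$. Conditionally on $I_k^c$, the inner average is a mean of $n$ i.i.d.\ terms with mean $P[\widehat g_k^r]$ and variance at most $n^{-1}P[\widehat g_k^{2r}]$, which is bounded uniformly over $\mathcal G$ by Assumption \ref{assumption:main} i); conditional Chebyshev then gives $\frac1n\sum_{i\in I_k}\widehat g_k(X_i)^r-P[\widehat g_k^r]\rightarrow_p0$, while $P[\widehat g_k^r]\rightarrow_p P[g_0^r]$ again by Cauchy--Schwarz and iii). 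Hence $\widehat{Var}(\widehat g_{k(i)}(X_i))\rightarrow_p P[g_0^2]-(Pg_0)^2=\sigma^2_g$.

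The main obstacle is this last step: the quantities built from the random functions $\widehat g_k$ are not sums of i.i.d.\ terms, so the weak law of large numbers does not apply directly. The cross-fitting structure is what rescues the argument — conditioning on the out-of-fold sample $I_k^c$ turns each within-fold average into a genuine i.i.d.\ average whose second moment is controlled uniformly by the $(4+\delta)$-moment bound of Assumption \ref{assumption:main} i) — while Proposition \ref{prop:mainprop} is what transfers the asymptotics of the OLS coefficients from the infeasible $g_0$ to the feasible $\{\widehat g_k\}_{k=1}^K$. Once all six inputs are shown to converge, the continuous mapping theorem delivers $\widehat\sigma^2\rightarrow_p\sigma^2$.
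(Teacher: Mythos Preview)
Your proposal is correct and follows essentially the same route as the paper: show $\widehat{Var}(\widehat g_{k(i)}(X_i))\rightarrow_p\sigma_g^2$ via the cross-fitting conditioning argument, combine Proposition \ref{prop:mainprop} with $\beta(\{\widehat g_k\}_{k=1}^K)\rightarrow_p\beta(g_0)$ (by continuity of $\beta(\cdot)$) to get $\widehat\beta(\{\widehat g_k\}_{k=1}^K)\rightarrow_p\beta(g_0)$, and conclude by the continuous mapping theorem. If anything, you are slightly more explicit than the paper in treating the straightforward pieces ($\widehat p$, the within-arm variances of $Y$) and in spelling out the Cauchy--Schwarz bounds behind $\beta(\{\widehat g_k\}_{k=1}^K)\rightarrow_p\beta(g_0)$.
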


\section{Simulations \& empirical results} \label{sec:sims}

We now validate MLRATE in practice, on both simulated data, and real Facebook user data. These two validation exercises serve complementary purposes: simulations allow us to verify that the CIs' empirical coverage is indeed close to their nominal coverage for the data generating process of our choice, while the Facebook data gives an indication of the magnitude of variance reduction that can be expected in practice. All computation is done on an internal cluster, on a standard 64GB ram machine.

Our simulated data generating process has $N =$ 10,000 iid observations and 100 covariates distributed as $X_i \sim \mathcal{N}(0,I_{100\times100})$. The outcome variable is $Y_i = b(X_i) + T_i \tau(X_i) + u_i$, where $b(\cdot)$ is the Friedman function $b(X_i) = 10\sin(\pi X_{i1} X_{i2}) + 20(X_{i3} - 0.5)^2  + 10X_{i4} + 5X_{i5}$ and the treatment effect function is $\tau(X_i) = X_{i1} + \log(1 + \exp(X_{i2}))$ \citep{friedman1991multivariate, nie2020quasi}. The treatment indicator is $T_i \sim \textrm{Bernoulli}(0.5)$, and the error term is $u_i \sim \mathcal{N}(0,25^2)$. Treatment is independent of covariates and the error term, and the error term is independent of the covariates. This data generating process involves non-trivial complexity, with nonlinearities and interactions in the baseline outcome, many extraneous covariates that do not affect outcomes, and heterogeneous treatment effects correlated with some covariates. We find the ATE by Monte Carlo integration, and compute the average number of times the MLRATE CIs contain this ATE, over 10,000 simulation repetitions, as well as 95\% CIs for this coverage percentage. Both in these simulations and the subsequent analysis of Facebook data, we choose gradient boosted regression trees (GBDT) and elastic net regression as two examples of ML prediction procedures in MLRATE, with scikit-learn's implementation \cite{pedregosa2011scikit}. Moreover, we choose $K = 2$ splits for cross-fitting.

\begin{table}
  \caption{CI coverage and variance reduction results of MLRATE-GBDT and MLRATE-Elastic Net on complex nonlinear simulated data. ``CI Coverage'' displays the average coverage percentage rate over 10,000 simulations, and the CI width for these estimated coverage rates. ``Relative CI Width'' displays the CI width for each method divided by the simple difference-in-means CI width (``Unadjusted''), averaged over the 10,000 simulations.}
  \label{tab:coverage}
  \centering
  \begin{tabular}{lccc}
    \toprule
    & MLRATE-GBDT & MLRATE-Elastic Net & Unadjusted\\
    \midrule
    CI Coverage (\%) & $95.18 \pm 0.42$ & $95.34 \pm 0.41$ & $94.88 \pm 0.43$ \\
    Relative\ CI Width & 0.62 & 0.86 & 1.00 \\
  \bottomrule
\end{tabular}
\end{table}

Table \ref{tab:coverage} shows the simulation results. ``CI Coverage'' displays the average coverage percentage rate over the 10,000 simulations, and the CI width for these estimated coverage rates. ``Relative CI Width'' displays the CI width for each method divided by the simple difference-in-means CI width (``Unadjusted''), averaged over the 10,000 simulations. 
Empirical coverage is close to the nominal coverage for all three estimators, with the CIs for empirical coverage including the nominal rate. Both the GBDT and elastic net versions of MLRATE demonstrate efficiency gains over the difference-in-means estimator. As might be expected given the highly nonlinear dependence of the outcomes on covariates, GBDT performs substantially better than the linear, elastic net model: on average across simulations, the MLRATE-GBDT CIs are 62\% the width of the unadjusted CIs, whereas the analogous figure for the elastic net CIs is 86\%. Also of interest is the comparison to the semiparametric efficiency bound \citep{newey1994asymptotic, hahn1998role}, which can be calculated explicitly for this data generating process: despite the fact that MLRATE is agnostic, and does not assume consistency of the ML procedure employed, the MLRATE-GBDT CIs are only 11.3\% wider than those implied by the semiparametric efficiency bound.

\begin{figure}
  \centering
  \begin{subfigure}[b]{0.45\textwidth}
  \centering
  \includegraphics[width=0.99\linewidth]{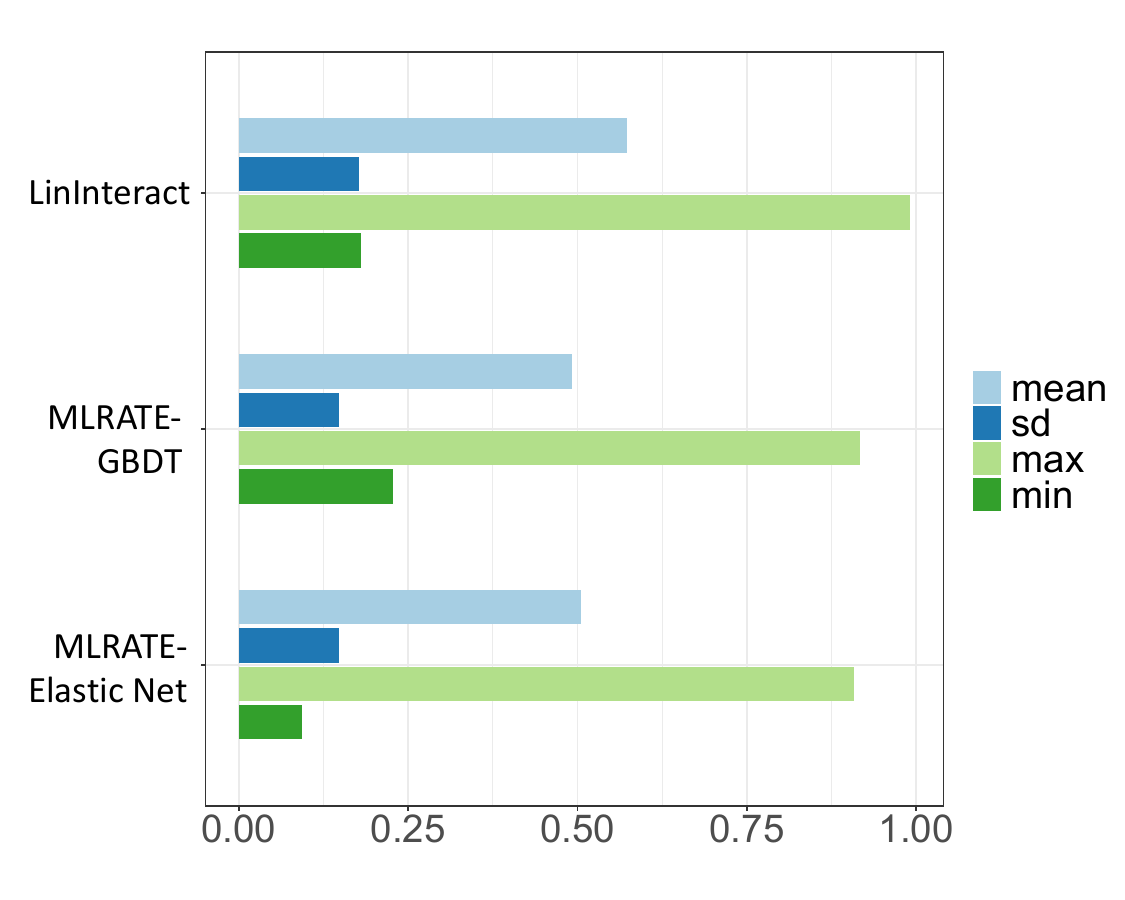}
  \caption{}
  \label{fig:sub1}
  \end{subfigure}
     \hfill
  \begin{subfigure}[b]{0.45\textwidth}
  \centering
  \includegraphics[width=0.99\linewidth]{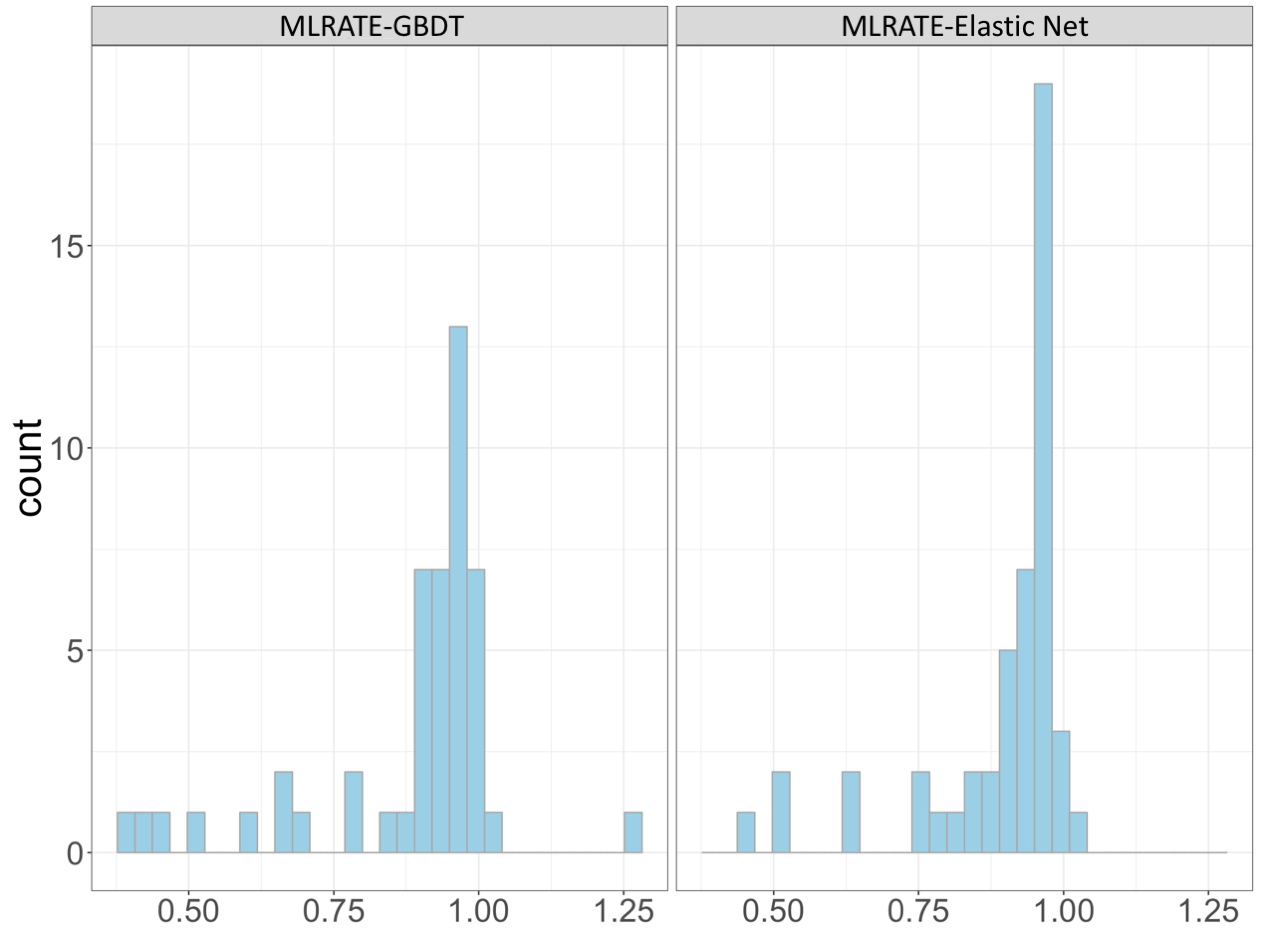}
  \caption{}
  \label{fig:sub2}
  \end{subfigure}
  \caption{Variance reduction results on 48 real metrics used in online experiments run by Facebook. Confidence intervals (CI) are calculated by sampling $\sim 400, 000$ observations for each metric
  . (a) Mean, standard deviation, maximum and minimum value of relative variance of LinInteract, MLRATE-GBDT and MLRATE-Elastic Net compared to the difference-in-means estimator among the metrics. (b) Distribution of CI width of MLRATE-GBDT and MLRATE-Elastic Net relative to LinInteract, by metric.}
  \label{fig:three graphs}
\end{figure}



The variance reduction numbers above are of course dependent on the particular data generating process specified in the simulation. To get a better sense of the magnitudes of variance reduction one might expect in practice, we evaluate the estimator on 48 real metrics used in online experiments run by Facebook, capturing a broad range of the most commonly consulted user engagement and app performance measurements. We focus on A/A tests in this evaluation rather than A/B tests run in production. This is because the true effect is unknown in the latter, which makes it impossible to evaluate the coverage properties of the CI. Because treatments in online experiments are typically subtle and are unlikely to greatly change the relationship between outcomes and covariates, the magnitude of variance reduction will likely be very similar in A/B tests. 

For each outcome metric, we select a random sample of approximately 400,000 users, and simulate an A/A test by assigning a treatment indicator for each user, drawn from a Bernoulli(0.5) distribution. The features used in the ML model vary for each metric and consist of the pre-experiment values of the metric, as well as the pre-experiment values of other metrics that have been grouped together as belonging to the same product area. There are between 20 and 100 other such metrics, with the exact number depending on the outcome metric in question. Outcome values are calculated as the sum of the daily values over a period of one week, and the features values are calculated as the sum of the daily values over the three weeks leading up to the experiment start date. 

For each metric, we calculate variances and CI width for four estimators of the ATE: The difference-in-means estimator; A univariate linear regression adjustment procedure of equation \eqref{eq:linearregadj} where the only covariate $X_i$ is the pre-experiment value of the outcome metric $Y_i$ (for simplicity, we denote it by `LinInteract'); And MLRATE-GBDT/Elastic Net with all available pre-experiment metrics used as features.

Figure \ref{fig:sub1} shows that LinInteract substantially outperforms the simple difference-in-means estimator, and MLRATE delivers additional gains still. Unlike in the simulated data generating process above, MLRATE-GBDT and MLRATE-Elastic Net perform similarly. The variance reduction relative to the difference-in-means estimator is 72 - 74\% on average across metrics, and relative to LinInteract is 19\%. The corresponding figures for reduction of the average CI width are 50 - 51\%, and 11 - 12\%, respectively. Alternatively, to achieve the same precision as the MLRATE-GBDT estimator, the difference-in-means estimator would require sample sizes on average 5.44 times as large on average across metrics and the univariate procedure would require sample sizes 1.56 times as large.

Figure \ref{fig:sub2} displays the metric-level distribution of CI widths relative to the univariate adjustment case. There is substantial heterogeneity in performance across metrics: for some, ML regression adjustment delivers only quite modest gains relative to univariate adjustment, while for others, it drastically shrinks CIs. This is natural given the variety of metrics in the analysis: some, especially binary or discrete outcomes, may benefit more from more sophisticated predictive modelling, whereas for others simple linear models may perform well. For some metrics, CIs are shrunk by half or more, which may be the difference between experimentation for those metrics being practical and not. As in the simulations, the coverage rates for ML regression adjusted CIs for these metrics are close to the nominal level. For the metric experiencing the largest variance reduction gains from MLRATE---where one might be the most concerned with coverage---we find an average coverage rate of 94.90\% over 10,000 simulated A/A tests, where each simulated A/A test is carried out on a 10\% subsample drawn at random with replacement from the initial user dataset.

We remark that we design our evaluation to give a realistic sense of the potential variance reduction gains that can be attained with minimal effort and common software implementations of standard ML algorithms. In fact, the supervised learning models we use in this analysis--GBDT and elastic net regression--are deliberately simple, and the training data sample sizes of around 400,000 observations are not especially large by the standards of online A/B tests. The input features to the models are not heavily preprocessed: they are typically raw logged metric values, as opposed to, say, embeddings generated by a prior ML layer. Moreover, as already mentioned in remark \ref{remark4}, we always choose the number of splits $K=2$ instead of treating it as a hyperparameter and tuning for better performance. We expect that with more sophisticated supervised learning techniques (e.g. deep, recurrent neural networks with transfer learning across metrics), larger datasets, and better choice of $K$ through cross-validation, the precision gains could be considerably greater still.\footnote{In Figure \ref{fig:sub2}, one metric in the GBDT case has substantially \textit{larger} variance than the univariate adjustment case, indicating that the default GBDT fit performs quite poorly on this sample. Larger sample sizes or more customized ML modeling will have the benefit of attenuating such anomalies.}

In the simulations and the empirical study above, the dimension of the covariates is not large compared to the sample size. However, our algorithm applies equally to the high-dimensional regime. In many high-dimensional applications, Assumption \ref{assumption:main} can be easily satisfied, and our theory fully extends to this case.

\section{Implementation}
The key guiding principle for selecting features for the ML model is that we can use any variables independent of treatment assignment. Thus any variable extracted before the experiment start is eligible. This simple rule facilitates collaboration with engineering and data science partners familiar with forecasting: they can freely apply their domain expertise to engineer features and build predictive models for specific metrics, without concerns about statistical validity as long as the cross-fitting step in MLRATE is enforced.

The ML step in MLRATE means that the analyst can err on the side of inclusivity in deciding what features to use, as irrelevant features will tend to be omitted from the fitted model. In contrast to \cite{deng2013improving}, for example, we are automatically learning the one ‘feature’ ($\widehat g_{k(i)}(X_i)$) that has the best predictive power instead of restricting ourselves to a particular pre-experiment feature, thus allowing for greater overall variance reduction. Moreover, this method is highly scalable as the ML step does not need to be performed once per experiment. Once predictions have been generated for a given metric, they can be used to improve precision for all experiments starting after the period used for feature construction.

For real-world applications, the linear regression step in MLRATE, which ensures non-inferiority relative to the difference-in-means estimator, is an important safeguard. There may be no guarantee in practice that the predictive models produced by modeling teams will always be well-calibrated, and without the linear regression layer this non-inferiority guarantee need not hold.

Finally, we note that an additional ``censoring'' step may be useful when the metric has substantial mass close to zero, reducing computation cost without significantly affecting estimation accuracy. After training the models $\{\widehat g_k\}_{k = 1,2,\ldots,K}$, instead of regression adjustment using $\{\widehat g_{k(i)}(X_i)\}$, define $\widehat g_{\tau}(X_i) = \mathcal T(\widehat g_{k(i)}(X_i), \tau)$
for some pre-determined threshold $\tau$, where $\mathcal T$ is the hard-thresholding operator 
$
\mathcal T(u, \tau) = u1_{\{u\geq \tau\}}.
$
Then one can perform regression adjustment with $\widehat g_{\tau}(X_i)$ in place of $\widehat g_{k(i)}(X_i)$, with the same statistical theory applying. Small values of $\tau$ will cause small efficiency losses, but can greatly reduce the computation cost on the linear regression when $N$ is large.\footnote{Covariance computations need only be explicitly performed on users with either non-zero metric values or a non-zero regressor (users with both values equal to zero can be separately accounted for). As such, the compute costs of these queries can end up being approximately linear in the number of non-truncated users when the outcome metric is sparse.}

\section{Conclusion}
MLRATE is a scalable methodology that allows ML algorithms to be used for variance reduction, while still giving formal statistical guarantees on consistency and CI coverage. Of particular practical importance is the methodology's robustness to the ML algorithm used, both in the sense that the ML algorithm used need not be consistent for the truth, and in the sense that no matter how bad the ML predictions are, MLRATE has asymptotic variance no larger than the difference-in-means estimator. Our application to Facebook data demonstrates variance reduction gains using pre-experiment covariates and even simple predictive algorithms. We expect that more sophisticated predictive algorithms, and incorporating other covariates into this framework--for example, generating user covariates by synthetic-control inspired strategies that incorporate contemporaneous data on outcomes for individuals \textit{outside} the experiment--could lead to more substantial efficiency gains still.

\newpage
\appendix

\section{Proof of Proposition 1}

For any (deterministic) $g\in \mathcal G$, we have 
$$
P[Z(g)Z(g)^\top] = M_1(g)\otimes M_2,
$$
where $\otimes$ denotes the Kronecker product, 
$$
M_1(g) = 
\begin{pmatrix}
1 & Eg(X)\\
Eg(X)& Eg(X)^2
\end{pmatrix},\quad 
M_2 = 
\begin{pmatrix}
1 & p\\
p & p
\end{pmatrix}.
$$
Therefore, any eigenvalue of $P[Z(g)Z(g)^\top]$ is the product of one eigenvalue of $M_1(g)$ and one eigenvalue of $M_2$. It's easy to verify from Assumption 1 that all eigenvalues of $M_1(g)$ and $M_2$ are nonnegative and bounded. Thus, we only need to show $\inf_{g\in\mathcal G}\lambda_{min}(M_1(g))>0$, $\lambda_{min}(M_2)>0$. 

Through some calculations, one can find out that
\begin{align*}
\lambda_{min}(M_1(g)) 
& = \frac{1}{2}\Big\{(Eg(X)^2+1) - \sqrt{(Eg(X)^2+1)^2 - 4Var(g(X))}\Big\}\\
& = \frac{2Var(g(X))}{(Eg(X)^2+1) + \sqrt{(Eg(X)^2+1)^2 - 4Var(g(X))}}
\geq \frac{Var(g(X))}{Eg(X)^2+1},
\end{align*}
which leads to 
$$
\inf_{g\in\mathcal G}\lambda_{min}(M_1(g))\geq
\frac{\inf_{g\in\mathcal G}Var(g(X))}{\sup_{g\in\mathcal G}Eg(X)^2+1}>0.
$$
On the other hand, $\lambda_{min}(M_2)>0$ can be deduced from $p\in(0, 1)$. By combining the above two inequalities, we conclude the proof.


\section{Proof of Proposition 2}

For compactness we may write the random variables $Z(\widehat{g}_k)$ as $\widehat{Z}_k$ and $Z(g_0)$ as $Z$. Similarly for any observation $i$ we write $Z_i(\widehat{g}_k)$ as $\widehat{Z}_{k,i}$ and $Z_i(g_0)$ as $Z_i$. We are only interested in convergence in probability, so we can assume that the inverse matrices in the definition of $\widehat{\beta}(\{\widehat{g}_k\}_{k = 1}^K)$ and $\widehat{\beta}(g_0)$ exist, as this happens with probability approaching 1 according to Lemma \ref{lem7.2}. We have
$
\widehat{\beta}(\{\widehat{g}_k\}_{k = 1}^K) - \beta(\{\widehat{g}_k\}_{k = 1}^K) = A + B,
$
where 
$$    A =
    \underbrace{\left[
    \left[
        \frac{1}{N} \sum_k \sum_{i \in I_k}
        \widehat{Z}_{k,i} \widehat{Z}_{k,i}^\top
    \right]^{-1} - 
    \left[ 
        \frac{1}{K} \sum_k
        P[\widehat{Z}_k \widehat{Z}_k^\top]
        \right]^{-1}
    \right]}_{F_0}\cdot\\
    \left[
        \frac{1}{N} \sum_k \sum_{i \in I_k}
        \widehat{Z}_{k,i}
        Y_i
    \right],
$$
and 
$$
B = 
    \left[ 
        \frac{1}{K} \sum_k
        P[\widehat{Z}_k \widehat{Z}_k^\top]
        \right]
        ^{-1}\underbrace{\left[
        \frac{1}{N} \sum_k \sum_{i \in I_k}
        [\widehat{Z}_{k,i}
        Y_i - P[\widehat{Z}_k Y]]
    \right]}_{G_0}.
$$
Similarly,
$
        \widehat{\beta}(g_0) - \beta(g_0)  = C + D,
$
where

$$
C = \underbrace{\left[
    \left[
        \frac{1}{N} \sum_i
        Z_i Z_i^\top
    \right]^{-1} - 
    \left[ 
        P[Z Z^\top]
    \right]^{-1}
    \right]}_{F_1}
    \left[
        \frac{1}{N} \sum_i
        Z_i
        Y_i
    \right]
$$
and 
$$
D = \left[ 
        P[Z Z^\top]
        \right]
        ^{-1}\underbrace{\left[
        \frac{1}{N} \sum_i
        [Z_i Y_i - P[Z Y]]
    \right]}_{G_1}.
$$
We can write $[\widehat{\beta}(\{\widehat{g}_k\}_{k = 1}^K) - \beta(\{\widehat{g}_k\}_{k = 1}^K)] - [\widehat{\beta}(g_0) - \beta(g_0)] = A - C + B - D$.
We show that $\sqrt{N} \lVert A - C \rVert \rightarrow_p 0$ and $\sqrt{N} \lVert B - D\rVert \rightarrow_p 0$. 
From the definitions of $F_0$ and $F_1$ above, we have
$ A - C =  [F_0 - F_1] \left[
        \frac{1}{N} \sum_k \sum_{i \in I_k}
        \widehat{Z}_{k,i}
        Y_i
    \right] + F_1
    \left[
        \frac{1}{N} \sum_k \sum_{i \in I_k}
        (\widehat{Z}_{k,i} - Z_i)
        Y_i
    \right].
$
If
\begin{itemize}
\item[1.] $\left \lVert \sqrt{N} [F_0 - F_1] \right \rVert = o_p(1)$
\item[2.] $\left \lVert \frac{1}{N} \sum_k \sum_{i \in I_k} \widehat{Z}_{k,i}Y_i \right \rVert = O_p(1)$
\item[3.] $\left \lVert \sqrt{N} F_1 \right\rVert = O_p(1)$
\item[4.] $ \left \lVert \frac{1}{N} \sum_k \sum_{i \in I_k} (\widehat{Z}_{k,i} - Z_i)Y_i \right\rVert = o_p(1)$,
\end{itemize}
then $\sqrt{N}\left \lVert A - C \right\rVert = o_p(1)$ as desired.
Similarly we write $B - D$ as 
$$    B - D = \left[ \left[ 
        \frac{1}{K} \sum_k
        P[\widehat{Z}_k \widehat{Z}_k^\top]
        \right]
        ^{-1} - \left[ 
        P[Z Z^\top]
        \right]
        ^{-1} \right]
    G_0 +  \left[ 
        P[Z Z^\top]
        \right]
        ^{-1}\left[G_0 - G_1\right].
$$
If 
\begin{itemize}
\item[5.] $\left \lVert \left[ 
       \frac{1}{K} \sum_k
        P[\widehat{Z}_k \widehat{Z}_k^\top]
        \right]
        ^{-1} - \left[ 
        P[Z Z^\top]
        \right] ^{-1} \right\rVert  = o_p(1)$
\item[6.] $ \left \lVert \sqrt{N}
        G_0 \right\rVert 
 = O_p(1)$
\item[7.] $ \left \lVert P[Z Z^\top]^{-1} \right\rVert  = O_p(1)$
\item[8.] $ \left \lVert \sqrt{N} [G_0 - G_1] \right\rVert  = o_p(1)$
\end{itemize}
then $\sqrt{N} \left \lVert B - D \right\rVert = o_p(1)$ as desired. We complete the proof in 8 steps by showing statements 1 - 8 above.
\paragraph{Step 1.}
 We apply Lemma \ref{lem:inv} by letting $
M_{1n} = \frac{1}{N}\sum_k \sum_{i \in I_k}\widehat{Z}_{k,i} \widehat{Z}_{k,i}^\top, B_n = M_{2n} = P[Z Z^\top], 
A_n = M_{3n} = \frac{1}{K}\sum_k P[\widehat{Z}_k \widehat{Z}_k^\top],
M_{4n} = \frac{1}{N} \sum_k\sum_{i \in I_k}Z_i Z_i^\top
$. Consequently, Step 1 amounts to verifying the conditions of Lemma \ref{lem:inv}. In fact, these conditions are guaranteed by Lemma \ref{lem7.1} as well as the following fact: For each $k = 1,\ldots, K$,
\begin{align}
\left\|\frac{1}{\sqrt{n}} \sum_{i \in I_k}
\left[\widehat{Z}_{k,i} \widehat{Z}_{k,i}^\top - 
        P[\widehat{Z}_k \widehat{Z}_k^\top] - 
        Z_i Z_i^\top +
        P[Z Z^\top] \right]\right\| \rightarrow_p 0.\label{eq-w}
        equation
\end{align}

We now prove \eqref{eq-w}. Define $W_{k,i} = \widehat{Z}_{k,i} \widehat{Z}_{k,i}^\top - 
        P[\widehat{Z}_k \widehat{Z}_k^\top] - 
        Z_i Z_i^\top +
        P[Z Z^\top]$, 
and note that conditional on the data in $I_k^c$, the function $\widehat{g}_k$ is non-random, and the $W_{k,i}$ are mean zero matrices, uncorrelated across observations in $I_k$. With slight abuse of notation, we use $E[\cdot\mid I_k^c]$ to denote expectations conditional on the observations with indices belonging to the set $I_k^c$. For any $k = 1,2, \ldots, K$, 
\begin{align}
E \left[ \left \lVert \frac{1}{\sqrt{n}} \sum_{i \in I_k} W_{k,i} \right \rVert^2 \middle| I_k^c \right]
&= \frac{1}{n} E \left[ \textrm{tr}\left( \sum_{i,j \in I_k} W_{k,i}^\top W_{k,j} \right) \middle| I_k^c \right] \\
&= \frac{1}{n} E \left[ \sum_{i \in I_k} \textrm{tr}\left( W_{k,i}^\top W_{k,i} \right) \middle| I_k^c \right] \\
&\le \frac{1}{n} E \left[ \sum_{i \in I_k} \left \lVert (\widehat{Z}_{k,i} \widehat{Z}_{k,i}^\top - Z_i Z_i^\top) \right \rVert^2 \middle| I_k^c \right] \\
&= P \left[ \left \lVert \widehat{Z}_k  \widehat{Z}_k^\top - Z Z^\top \right \rVert^2 \right]. \label{eq:zz}
\end{align}
If the RHS of \eqref{eq:zz} is $o_p(1)$, we can use Lemma 6.1 of \cite{chernozhukov2018double} to conclude that $\|\frac{1}{\sqrt{n}}\sum_{i \in I_k} W_{k,i}\|$ is $o_p(1)$ as required. Some calculations give 
\begin{align}
\left \lVert \widehat{Z}_k  \widehat{Z}_k^\top - Z Z^\top \right \rVert^2 \le 12 [( \widehat{g}_k(X) - g_0(X) )^2  + ( \widehat{g}_k(X)^2 - g_0(X)^2 )^2 ]. \label{eq:step1bound}
\end{align}
Then
$P \left[  (\widehat{g}_k - g_0 )^2  \right]
\le \sqrt{ P[(\widehat{g}_{k} - g_0)^4]}\rightarrow_p 0.
$
Also
\begin{align}
P \left[ ( \widehat{g}_k^2 - g_0^2 )^2  \right]
&=  P[(\widehat{g}_{k} - g_0)^2(\widehat{g}_{k} + g_0)^2] \\
&\le \sqrt{ P[(\widehat{g}_{k} - g_0)^4]} \sqrt{P[(\widehat{g}_{k} + g_0)^4]} \\
&\le \sqrt{ P[(\widehat{g}_{k} - g_0)^4]} {\sqrt{\sup_{g\in \mathcal G}P[g^4]}} \\
&\rightarrow_p 0,
\end{align}
where the second-to-last line follows because $\widehat{g}_{k} + g_0 \in \mathcal{G}$ as $\mathcal{G}$ is a vector space. 
We conclude from \eqref{eq:step1bound} that the RHS of \eqref{eq:zz} is $o_p(1)$.

\paragraph{Step 2.}
By the Cauchy-Schwarz inequality,
\begin{align}
    \left \lVert \frac{1}{N} \sum_k  \sum_{i \in I_k}
        Z_i(\widehat{g}_k) Y_i \right \rVert &\le
    \sqrt{\frac{1}{N} \sum_k  \sum_{i \in I_k}
        \left \lVert Z_i(\widehat{g}_k) \right \rVert^2 }
    \sqrt{\frac{1}{N} \sum_k \sum_{i \in I_k} Y_i^2}.
\end{align}
As $E[Y^2] < \infty$, the second term on the RHS is $O_p(1)$ by Markov's inequality. Also for $i \in I_k$, $E \left[ \left \lVert Z_i(\widehat{g}_k)  \right \rVert^2 \right] 
= E [1 + T_i + \widehat{g}_{k}(X_i)^2 + T_i \widehat{g}_k(X_i)^2] 
\le \sup_{g\in\mathcal{G}} E [2[1 + g(X_i)^2]]
< \infty
$, and by Markov's inequality the first term on the RHS is also $O_p(1)$. 

\paragraph{Step 3.}
By the central limit theorem, 
$\sqrt{N} \left[ \sum_i \frac{Z_i Z_i^\top}{N} - P[Z Z^\top]\right]$ is asymptotically normal. By the delta method and invertibility of $P[ZZ^\top]$, $\sqrt{N} \left[\left[ \sum_i \frac{Z_i Z_i^\top}{N}\right]^{-1} - P[Z Z^\top]^{-1}\right]$ is also, and hence its norm is $O_p(1)$.

\paragraph{Step 4.} 
We show that for any $k$, $\frac{1}{n}\sum_{i \in I_k} (\widehat{g}_k(X_i) - g_0(X_i)) Y_i = o_p(1)$, from which the result follows. By Cauchy-Schwarz, 
$$\frac{1}{n}\sum_{i \in I_k} (\widehat{g}_k(X_i) - g_0(X_i)) Y_i
    \le \sqrt{\frac{1}{n}\sum_{i \in I_k} (\widehat{g}_k(X_i) - g_0(X_i))^2} 
    \sqrt{\frac{1}{n} \sum_{i \in I_k} Y_i^2}.$$
As $Y$ has finite second moment by assumption, it remains to show the first term on the RHS is $o_p(1)$. We have
\begin{equation}
    \frac{1}{n}\sum_{i \in I_k} (\widehat{g}_k(X_i) - g_0(X_i))^2
    = \frac{1}{n}\sum_{i \in I_k} {\left[(\widehat{g}_k(X_i) - g_0(X_i))^2 - P[(\widehat{g}_k - g_0)^2]\right]} + P[(\widehat{g}_k - g_0)^2].
    \label{eq:step4}
\end{equation}
From Lemma 6.1 in \cite{chernozhukov2018double}, the first term on the RHS in \eqref{eq:step4} is $o_p(1)$ and by the convergence assumption on $\widehat{g}_k$, the second term is too.

\paragraph{Step 5.} By the continuous mapping theorem it suffices to show that 
$$ \lVert \frac{1}{K} \sum_k
        \left[P[Z(\widehat{g}_k) Z(\widehat{g}_k)^\top] -
        P[Z(g_0) Z(g_0)^\top] \right] \rVert
         = o_p(1).$$
From the argument in Step 1, both $P[[\widehat{g}_k - g_0]^2]$ and $P[[\widehat{g}_k^2 - g_0^2]^2]$ are $o_p(1)$ for all $k$, and hence $P[\widehat{g}_k - g_0]$ and $P[\widehat{g}_k^2 - g_0^2]$ are both $o_p(1)$ for all $k$. The other entries in the matrix are straightforwardly $o_p(1)$.

\paragraph{Step 6.}
This follows from Step 8 and the fact that by Chebyshev's inequality,
$
       \| \frac{1}{\sqrt{N}} \sum_i
         \left[Z_i Y_i - P[Z Y] \right] \|= O_p(1)$.
\paragraph{Step 7.}
$P[ZZ^\top]$ is invertible by assumption.

\paragraph{Step 8.}
The reasoning here is similar to Step 1. For any $k$ and $i \in I_k$, define 
$W_{k,i} = \widehat{Z}_{k,i}
Y_i - P[\widehat{Z}_k Y]
- Z_i Y_i + P[Z Y]$, and note that conditional on the data in $I_k^c$, the $W_{k,i}$ are mean zero matrices, uncorrelated across observations in $I_k$. Then
$$
E \left[ \left \lVert \frac{1}{\sqrt{n}} \sum_{i \in I_k} W_{k,i} \right \rVert^2 \middle| I_k^c \right] 
\le \frac{1}{n} E \left[ \sum_{i \in I_k} \left \lVert (\widehat{Z}_{k,i} Y_i - Z_i Y_i) \right \rVert^2 \middle| I_k^c \right]
= P \left[\left \lVert \widehat{Z}_k Y - Z Y \right \rVert^2 \right]. \label{eq:zy}
$$
Because $P [( \widehat{g}_k(X) - g_0 (X))^2 Y^2 ] \le \sqrt{P[(\widehat{g}_k - g_0)^4]}\sqrt{P[Y^4]} \rightarrow_p 0$, the RHS of \eqref{eq:zy} is $o_p(1)$. We use Lemma 6.1 of \cite{chernozhukov2018double} to conclude that $\left \lVert \frac{1}{\sqrt{n}} \sum_{i \in I_k} W_{k,i} \right \rVert$ is also $o_p(1)$, from which the result follows.


\section{Proof of Theorem 1}

 We have
\begin{align}
\widehat{\alpha}_1(\{\widehat{g}_k\}_{k = 1}^K) - \widehat{\alpha}_1(g_0) &=  \left[ \widehat{\alpha}_1(\{\widehat{g}_k\}_{k = 1}^K) - \beta_1(\{\widehat{g}_k\}_{k = 1}^K) - \beta_3(\{\widehat{g}_k\}_{k = 1}^K)  \frac{1}{K}\sum_{k = 1}^K P \widehat{g}_{k} \right] \\ 
& \quad -  \left[ \widehat{\alpha}_1(g_0) - \beta_1(g_0) - \beta_3(g_0)  P g_0 \right] \\
&= A + B,
\end{align}
where 
\begin{align}
A = 
    [\widehat{\beta_1}(\{\widehat{g}_k\}_{k = 1}^K) - \beta_1(\{\widehat{g}_k\}_{k = 1}^K)] - 
    [\widehat{\beta_1}(g_0) - \beta_1(g_0)],
\end{align}
and
\begin{equation}
    B =
    \underbrace{\left[ \widehat{\beta_3}(\{\widehat{g}_k\}_{k = 1}^K)  \frac{1}{N} \sum_i \widehat{g}_{k(i)}(X_i)
    - \beta_3(\{\widehat{g}_k\}_{k = 1}^K)  \frac{1}{K}\sum_{k = 1}^K P \widehat{g}_{k} \right] }_C
    - \underbrace{\left[ \widehat{\beta_3}(g_0) \frac{1}{N} \sum_i g_0(X_i)
    - \beta_3(g_0)  P g_0
    \right]}_D.
\end{equation}
Proposition 1 has established that $A = o_p(1/\sqrt{N})$. 
Moreover
\begin{equation}
    C = \underbrace{\left(\widehat{\beta_3}(\{\widehat{g}_k\}_{k = 1}^K) 
    - \beta_3(\{\widehat{g}_k\}_{k = 1}^K) \right)
    \frac{1}{N} \sum_i \widehat{g}_{k(i)}(X_i)}_{C_1}
    + \underbrace{\beta_3(\{\widehat{g}_k\}_{k = 1}^K)
    \left(
    \frac{1}{N} \sum_{i} \left[ \widehat{g}_{k(i)}(X_i) - 
    P \widehat{g}_{k(i)}\right]
    \right)}_{C_2}
\end{equation}
and 
\begin{align}
    D = \underbrace{\left(\widehat{\beta_3}(g_0) - \beta_3(g_0) \right) \frac{1}{N} \sum_i g_0(X_i)}_{D_1}
    + \underbrace{\left(
    \beta_3(g_0) \frac{1}{N} \sum_i \left[ g_0(X_i) -  P g_0 \right]
    \right)}_{D_2}.
\end{align}
We show $C_1-D_1$ and $C_2-D_2$ are $o_p(1/\sqrt{N})$ to conclude. In fact
\begin{align}
    C_1 - D_1 &= \left(\widehat{\beta_3}(\{\widehat{g}_k\}_{k = 1}^K) 
    - \beta_3(\{\widehat{g}_k\}_{k = 1}^K) - \widehat{\beta_3}(g_0) + \beta_3(g_0)  \right) \frac{1}{N} \sum_i \widehat{g}_{k(i)}(X_i) \nonumber \\ 
    &\quad +  \left(\widehat{\beta_3}(g_0) - \beta_3(g_0) \right) \frac{1}{N} \sum_i \left[\widehat{g}_{k(i)}(X_i) -  g_0(X_i) \right]  = o_p(1/\sqrt{N}).
\end{align}
This is because
\begin{itemize}
    \item $\widehat{\beta_3}(\{\widehat{g}_k\}_{k = 1}^K)
    - \beta_3(\{\widehat{g}_k\}_{k = 1}^K) - \widehat{\beta_3}(g_0) + \beta_3(g_0) = o_p(1/\sqrt{N})$ from Proposition 1;
    \item $\frac{1}{N} \sum_i \widehat{g}_{k(i)}(X_i) = \frac{1}{N} \sum_i {g}_{0}(X_i) + \frac{1}{N} \sum_i (\widehat{g}_{k(i)}(X_i)) - g_0(X_i))= O_p(1)$ from the LLN and the same logic bounding \eqref{eq:step4} above;
    \item $\widehat{\beta_3}(g_0) - \beta_3(g_0) = O_p(1/\sqrt{N})$ from the CLT and the fact that $P(Z(g_0)Z(g_0)\top)$ has all eigenvalues bounded away from 0;
    \item $\frac{1}{N} \sum_i (\widehat{g}_{k(i)}(X_i) -  g_0(X_i) ) = o_p(1)$ again from bounding argument applied to \eqref{eq:step4}.
\end{itemize}
Similarly,
\begin{align}
    C_2 - D_2 &=  \beta_3(\{\widehat{g}_k\}_{k = 1}^K)
    \left(
    \frac{1}{N} \sum_{i} \left[ \left[ \widehat{g}_{k(i)}(X_i) - 
    P \widehat{g}_{k(i)}\right] - \left[ g_0(X_i) -  P g_0 \right] \right]
    \right)  \nonumber  \\
    &+  \left(
    \left(\beta_3(\{\widehat{g}_k\}_{k = 1}^K) - \beta_3(g_0) \right) \frac{1}{N} \sum_i \left[ g_0(X_i) -  P g_0 \right]
    \right) = o_p(1/\sqrt{N}), 
\end{align}
which results from the following facts:
\begin{itemize}
    \item $\beta_3(\{\widehat{g}_k\}_{k = 1}^K) = \beta_3(g_0) + (\beta_3(\{\widehat{g}_k\}_{k = 1}^K) - \beta_3(g_0))=O_p(1)$;
    \item $\frac{1}{N} \sum_{i} \left[ \left[ \widehat{g}_{k(i)}(X_i) - 
    P \widehat{g}_{k(i)}\right] - \left[ g_0(X_i) -  P g_0 \right] \right] = o_p(1/\sqrt{N})$ from the same reasoning applied to bound (\ref{eq-w});
    \item $\beta_3(\{\widehat{g}_k\}_{k = 1}^K) - \beta_3(g_0) = o_p(1)$ due to convergence of $\widehat g_k$ to $g_0$, continuity of $\beta_3(\cdot)$, and the continuous mapping theorem;
    \item $\frac{1}{N} \sum_i \left[ g_0(X_i) -  P g_0 \right] = O_p(1/\sqrt{N})$ from the CLT.
\end{itemize}

Combining the above arguments, we conclude that $B = o_p(1/\sqrt{N})$.

\section{Proof of Proposition 4}
We first show that $\widehat{Var}(\widehat{g}_{k(i)}(X_i)) \rightarrow_p \sigma^2_g$. We have
\begin{align}
    \widehat{Var}(\widehat{g}_{k(i)}(X_i)) &= \frac{1}{K}\sum_k \frac{1}{n} \sum_{i \in I_k} \widehat{g}_k(X_i)^2 - \left[\frac{1}{K}\sum_k \frac{1}{n} \sum_{i \in I_k} \widehat{g}_k(X_i)\right]^2.
\end{align}
By the same logic as in Step 1 of the proof of Proposition 1, 
for each $k = 1,2,\ldots, K$, $$E\left[ \left \lVert \frac{1}{n} \sum_{i \in I_k} [\widehat{g}_k(X_i)^2 - P\widehat{g}_k^2] \right \rVert^2 \middle| I_k^c  \right] \rightarrow_p 0,$$
and so $\frac{1}{n}\sum_{i \in I_k} \widehat{g}_k(X_i)^2 - P\widehat{g}_k^2 \rightarrow_p 0$. Since $P\widehat{g}_k^2 \rightarrow_p Pg_0^2$, it follows that $\frac{1}{n}\sum_{i \in I_k} \widehat{g}_k(X_i)^2 \rightarrow_p Pg_0^2$. Similarly $\frac{1}{n}\sum_{i \in I_k} \widehat{g}_k(X_i) \rightarrow_p Pg_0$. Hence $\widehat{Var}(\widehat{g}_{k(i)}(X_i)) \rightarrow_p \sigma^2_g$. Also, by Proposition 1,
\begin{align}
\left \lVert \widehat{\beta}(\{\widehat{g}_k\}_{k = 1}^K) - \beta(\{\widehat{g}_k\}_{k = 1}^K) \right \rVert \rightarrow_p 0   
\end{align}
and by continuity of $\beta(\cdot)$ and the continuous mapping theorem,
\begin{align}
\left \lVert \beta (\{\widehat{g}_k\}_{k = 1}^K) - \beta(g_0) \right \rVert \rightarrow_p 0.
\end{align}
Consequently $
\left \lVert \widehat{\beta}(\{\widehat{g}_k\}_{k = 1}^K) - \beta(g_0) \right \rVert \rightarrow_p 0$.
By the continuous mapping theorem, we conclude that $\widehat{\sigma}^2 \rightarrow_p \sigma^2$.

\section{Proof of auxiliary lemmas}

\begin{lemma}\label{lem7.1}

Given Assumption 1, 
$$
\bigg\|\frac{1}{N}\sum_k\sum_{j\in I_k}\widehat Z_{k, j}\widehat Z_{k, j}^\top -  \frac{1}{K}\sum_{k}P(\widehat Z_k\widehat Z_k^\top)\bigg\| = O_p(1/\sqrt{n}).
$$

\end{lemma}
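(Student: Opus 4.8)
The idea is to exploit the cross-fitting structure: conditionally on the out-of-fold data $I_k^c$, the function $\widehat g_k$ is deterministic, so within the fold $I_k$ the matrices $\widehat Z_{k,j}\widehat Z_{k,j}^\top$ are i.i.d.\ with conditional mean $P(\widehat Z_k\widehat Z_k^\top)$, and a conditional second-moment bound together with Chebyshev's inequality yields the $O_p(1/\sqrt n)$ rate one fold at a time.

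First I would reduce to a single fold. Since $N = Kn$,
$$
\frac1N\sum_k\sum_{j\in I_k}\widehat Z_{k,j}\widehat Z_{k,j}^\top - \frac1K\sum_k P(\widehat Z_k\widehat Z_k^\top)
= \frac1K\sum_{k=1}^K\Big[\frac1n\sum_{j\in I_k}\widehat Z_{k,j}\widehat Z_{k,j}^\top - P(\widehat Z_k\widehat Z_k^\top)\Big],
$$
so, as $K$ is fixed, by the triangle inequality it suffices to show that each bracketed term is $O_p(1/\sqrt n)$. Fixing $k$ and conditioning on $I_k^c$, the function $\widehat g_k$ is non-random, the summands $\widehat Z_{k,j}\widehat Z_{k,j}^\top$ ($j\in I_k$) are i.i.d.\ with mean $P(\widehat Z_k\widehat Z_k^\top)$, the cross terms vanish, and $\|\widehat Z_k\widehat Z_k^\top\| = \|\widehat Z_k\|^2$, so
$$
E\Big[\Big\|\frac1{\sqrt n}\sum_{j\in I_k}\big(\widehat Z_{k,j}\widehat Z_{k,j}^\top - P(\widehat Z_k\widehat Z_k^\top)\big)\Big\|^2\,\Big|\,I_k^c\Big]
\le P\big[\|\widehat Z_k\widehat Z_k^\top\|^2\big].
$$
Now $\|\widehat Z_k\|^2 \le 2(1+\widehat g_k(X)^2)$, hence $\|\widehat Z_k\widehat Z_k^\top\|^2 \le 8(1+\widehat g_k(X)^4)$, and since $\widehat g_k\in\mathcal G$ with probability one, the right-hand side is bounded by the deterministic constant $8(1+\sup_{g\in\mathcal G}P[g^4])<\infty$ from Assumption \ref{assumption:main}(i), uniformly in $\widehat g_k$ and $n$.

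To conclude, the conditional second moment being bounded by a fixed constant $C$, the conditional Markov inequality gives $P\big(\|n^{-1/2}\sum_{j\in I_k}(\cdots)\| > M \mid I_k^c\big) \le C/M^2$ for every $M>0$; taking expectations removes the conditioning, so $n^{-1/2}\sum_{j\in I_k}\big(\widehat Z_{k,j}\widehat Z_{k,j}^\top - P(\widehat Z_k\widehat Z_k^\top)\big) = O_p(1)$, i.e.\ the $k$-th bracketed term is $O_p(1/\sqrt n)$; summing the $K$ terms finishes the proof. (Equivalently one may invoke Lemma 6.1 of \cite{chernozhukov2018double} applied to the rescaled averages.) The only point requiring care is the moment bound in the second step: because we want the rate $O_p(1/\sqrt n)$ rather than merely $o_p(1)$, the conditional variance must be controlled by a constant that is uniform over $\mathcal G$ and over $n$, which is exactly what the vector-space / bounded-fourth-moment structure in Assumption \ref{assumption:main}(i) supplies; without this uniformity the passage from the conditional bound to an unconditional rate would not go through cleanly. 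Everything else is routine.
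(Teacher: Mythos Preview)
Your argument is correct and is more direct than the paper's. Both proofs begin the same way---reduce to a single fold, condition on $I_k^c$ so that the summands become i.i.d.\ centered matrices---but then diverge. The paper works entry by entry, passes to a sub-subsequence along which $P(\widehat g_k-g_0)^4\to 0$ almost surely, and establishes a full conditional CLT via a characteristic-function expansion, using the $(4+\delta)$-moment in Assumption~\ref{assumption:main}(i) to control the third-order remainder; $O_p(1)$ then follows from convergence in distribution. You instead observe that the uniform bound $\sup_{g\in\mathcal G}P[g^4]<\infty$ (immediate from Assumption~\ref{assumption:main}(i)) already caps the conditional second moment of the rescaled sum by a deterministic constant, so Chebyshev delivers tightness in one line. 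Your route needs only fourth moments (the extra $\delta$ is unnecessary here), avoids the subsequence and characteristic-function machinery, and forgoes identifying the limiting law---which is fine, since the lemma only claims $O_p(1/\sqrt n)$ and that limit is not used elsewhere in the paper. One small quibble: the parenthetical appeal to Lemma~6.1 of \cite{chernozhukov2018double} is not quite apt, as that lemma converts a \emph{vanishing} conditional moment into $o_p(1)$, whereas here you want a \emph{bounded} conditional moment to yield $O_p(1)$; your direct Markov/Chebyshev step is the right tool, and the parenthetical can simply be dropped.
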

\begin{proof}
Since the number of splits $K$ is bounded, we only need to verify for any $k\in \{1,2,\ldots,K\}$,
$$
\bigg\|\frac{1}{n}\sum_{j\in I_k}\widehat Z_{k, j}\widehat Z_{k, j}^\top - P(\widehat Z_k\widehat Z_k^\top)\bigg\| = O_p(1/\sqrt{n}).
$$
Below we'll prove 
\begin{equation}
\frac{1}{n}\sum_{j\in I_k} T_j^2\widehat g_k^2(X_j) - E [T_j^2\widehat g_k^2(X_j)|I_k^c] = O_p(1/\sqrt{n}).\label{eq-lem7.1-Scale0}
\end{equation}
The other terms can be derived in the similar manner.

First, since $P(\widehat g_k - g_0)^4\rightarrow_p 0$ as $n\rightarrow \infty$, we know that for any subsequence $\{n_l\}$ of $\mathbb N$, it further has a subsequence 
$\{n_{l}'\}$,
such that $P(\widehat g_k - g_0)^4\rightarrow 0$ a.s. as $l\rightarrow \infty$. 
Our next step is to prove 
\begin{equation}
\frac{1}{\sqrt{n_l'}}\sum_{j\in I_k} T_j^2\widehat g_k^2(X_j) - E [T_j^2\widehat g_k^2(X_j)|I_k^c] = O_p(1)\label{eq-lem7.1-Scale}
\end{equation}
as $l\rightarrow \infty$.

For notational simplicity, define $V_{k, j} := T_j^2\widehat g_k^2(X_j) - E [T_j^2\widehat g_k^2(X_j)|I_k^c]$. Since $\{V_{k, j}\}_{j\in I_k}$ are independent conditioned on $I_k^c$, for any $t\in \mathbb R$ we have  
\begin{align*}
&E \exp\Big(it/{\sqrt{n_l'}}\cdot\sum_{j\in I_k}V_{k, j}\Big) = E E\Big[\exp\Big(it/{\sqrt{n_l'}}\cdot\sum_{j\in I_k}V_{k, j}\Big)\Big| I_k^c\Big]\\
& = E\Big\{ E\Big[\exp\Big(it/{\sqrt{n_l'}}\cdot V_{k, j}\Big)\Big| I_k^c\Big]\Big\}^{n_l'}.
\end{align*}

Furthermore, 
\begin{align}
&\lim_{l\rightarrow \infty} E \exp\Big(it/{\sqrt{n_l'}}\cdot\sum_{j\in I_k}V_{k, j}\Big)\nonumber
 = \lim_{l\rightarrow \infty} E\Big\{ E\Big[\exp\Big(it/{\sqrt{n_l'}}\cdot V_{k, j}\Big)\Big| I_k^c\Big]\Big\}^{n_l'} \nonumber\\
& = E \lim_{l\rightarrow \infty}\Big\{ E\Big[\exp\Big(it/{\sqrt{n_l'}}\cdot V_{k, j}\Big)\Big| I_k^c\Big]\Big\}^{n_l'}. \label{eq-lem7.1-CharacteristicFunction}
\end{align}

Our goal is now to derive the limit in the last term so that we can infer the limiting distribution of $1/\sqrt{n_l'}\cdot \sum_{j\in I_k}V_{k, j}$. 

First, we conduct the Taylor expansion
$$
\exp\Big(it/{\sqrt{n_l'}}\cdot V_{k, j}\Big) = 1 + it.{\sqrt{n_l'}}\cdot V_{k, j} -\frac{t^2}{2n_l'}V_{k, j}^2 + R_{k, j}.
$$
Here 
$$
R_{k, j} = \exp\Big(it/{\sqrt{n_l'}}\cdot V_{k, j}\Big) - \Big[1 + it/{\sqrt{n_l'}}\cdot V_{k, j} -\frac{t^2}{2n_l'}V_{k, j}^2\Big].
$$
Thus
\begin{align}
&E\Big[\exp\Big(it/{\sqrt{n_l'}}\cdot V_{k, j}\Big)\Big| I_k^c\Big] = 1 + {it}/{\sqrt{n_l'}}\cdot E[V_{k, j}|I_k^c] - \nonumber\\
&\quad \frac{t^2}{2n_l'}E[V_{k, j}^2|I_k^c] +E [R_{k, j}| I_k^c] = 1 -\frac{t^2}{2n_l'}E[V_{k, j}^2|I_k^c] + E [R_{k, j}| I_k^c]\label{eq-lem7.1-TaylorExpansion}
\end{align}

First, with probability 1, 
\begin{align}
\lim_{l\rightarrow \infty}E[V_{k, j}^2|I_k^c]& = \lim_{l\rightarrow \infty} \Big\{E[T_j^4\widehat g_k^4(X_j)|I_k^c] - E[T_j^2\widehat g_k^2(X_j)|I_k^c]^2\Big\}\nonumber\\
& = p\cdot Pg_0^4 - p^2 \cdot (P g_0^2)^2.\label{eq-lem7.1-QuadraticTerm}
\end{align}

Next, we bound $|E [R_{k, j}| I_k^c]|$. In fact,
$$
R_{k, j}\leq
\begin{cases}
\frac{2t^3}{{n_l'}^{3/2}}V_{k, j}^3\quad 
&\text{ when } |V_{k, j}|\leq \frac{\sqrt{n_l'}}{2t},\\
2 + \frac{t}{\sqrt{n_l'}}|V_{k, j}| + \frac{t^2}{2n_l'}|V_{k, j}|^2 & \text{ otherwise}.
\end{cases}
$$
This means 
$$
|E[R_{k, j}| I_k^c]|\leq E[R_{k, j}^{(1)}|I_k^c] + E[R_{k, j}^{(2)}|I_k^c],
$$
where $R_{k, j}^{(1)} = \frac{2t^3}{{n_l'}^{3/2}}|V_{k, j}|^31_{\{ |V_{k, j}|\leq \sqrt{n_l'}/(2t)\}}$, \\$R_{k, j}^{(2)} = (2 + \frac{t}{\sqrt{n_l'}}|V_{k, j}| + \frac{t^2}{2n_l'}|V_{k, j}|^2)1_{\{ |V_{k, j}|> \sqrt{n_l'}/(2t)\}}$.

On the one hand, 
\begin{align*}
&    E[R_{k, j}^{(1)}|I_k^c]
\leq \frac{2t^3}{{n_l'}^{3/2}} E\bigg[ |V_{k, j}|^{2+\delta/2}\cdot \Big({\sqrt{n_l'}}/{2t}\Big)^{1-\delta/2}\bigg|I_k^c\bigg]\\
&= \frac{2^{\delta/2} t^{2+\delta/2}}{{n_l'}^{1 + \delta/4}} E\Big[|T_j^2\widehat g_k^2(X_j) - E T_j^2\widehat g_k^2(X_j)|^{2+\delta/2}\Big|I_k^c\Big]
\leq \frac{2^{2+\delta}t^{2+\delta/2}}{{n_l'}^{1+\delta/4}}P|\widehat g_k|^{4+\delta}.
\end{align*}
On the other hand, by Markov's inequality,
\begin{align*}
& E[R_{k, j}^{(2)}|I_k^c] \leq 2 E \Big[\Big({2t}/{\sqrt{n_l'}}\Big)^{2+\delta/2}|V_{k, j}|^{2+\delta/2}\Big|I_k^c\Big] +{t}/{\sqrt{n_l'}}\cdot\\
&\quad E \Big[|V_{k, j}|\cdot \Big({2t}/{\sqrt{n_l'}}\Big)^{1+\delta/2}|V_{k, j}|^{1+\delta/2}\Big|I_k^c\Big]+ \frac{t^2}{2n_l'}\cdot\\
&\quad E \Big[|V_{k, j}|^2\cdot \Big({2t}/{\sqrt{n_l'}}\Big)^{\delta/2}|V_{k, j}|^{\delta/2}\Big|I_k^c\Big]\leq\frac{2^{6+\delta}t^{2+\delta/2}}{{n_l'}^{1+\delta/4}}P|\widehat g_k|^{4+\delta}.
\end{align*}
Combining the above two bounds, we deduce that
$$
|E[R_{k, j}| I_k^c]|\leq \frac{2^{7+\delta}t^{2+\delta/2}}{{n_l'}^{1+\delta/4}}P|\widehat g_k|^{4+\delta}.
$$
Thus with probability 1, $E[R_{k, j}| I_k^c] = o(1/n_l')$.

Combining the above bound, \eqref{eq-lem7.1-TaylorExpansion} and \eqref{eq-lem7.1-QuadraticTerm}, we obtain that with probability 1, 
\begin{align*}
    &\lim_{l\rightarrow \infty} n_l'\log E\bigg[\exp\Big(it/{\sqrt{n_l'}}\cdot V_{k, j}\Big)\bigg| I_k^c\bigg] \\
    = &\lim_{l\rightarrow \infty} n_l'\log \bigg(1 -\frac{t^2}{2n_l'}E[V_{k, j}^2|I_k^c] + E [R_{k, j}| I_k^c]\bigg)\\
    = &-\frac{t^2}{2n_l'}\big[p\cdot Pg_0^4 - p^2\cdot (P g_0^2)^2\big].
\end{align*}

Finally we plug the above into \eqref{eq-lem7.1-CharacteristicFunction} and conclude that 
$$
\lim_{l\rightarrow \infty} E \exp\Big(it/{\sqrt{n_l'}}\cdot \sum_{j\in I_k}V_{k, j}\Big) = \exp\bigg\{-\frac{t^2}{2n_l'}\big[p\cdot Pg_0^4 - p^2\cdot (P g_0^2)^2\big]\bigg\}.
$$
This implies that $\frac{1}{\sqrt{n_l'}}\sum_{j\in I_k}V_{k, j}$ converges in distribution to a centered normal random variable with variance $p\cdot Pg_0^4 - p^2\cdot (P g_0^2)^2$, and \eqref{eq-lem7.1-Scale} follows.

Finally, since for any subsequence $\{n_l\}$ of $\mathbb N$, it further has a subsequence $\{n_l'\}$ such that \eqref{eq-lem7.1-Scale} holds, it can only be the case that \eqref{eq-lem7.1-Scale0} is true.

\end{proof}

\begin{lemma}\label{lem7.2}
The following hold with probability tending to 1: 
\begin{equation}
\lambda_{\min}\bigg(\frac{1}{n}\sum_{i\in I_k} \widehat Z_{k, i}\widehat Z_{k, i}^\top\bigg)\geq \frac{1}{2}\inf_{g\in \mathcal{G}} \lambda_{min}(P[Z(g)Z(g)^\top])\quad \forall k\in \{1,2,\ldots,K\};\label{eq-lem7.2-1}
\end{equation}
\begin{equation}
\lambda_{\min}\bigg(\frac{1}{N}\sum_{i=1}^N \widehat Z_{i}\widehat Z_{i}^\top\bigg)\geq \frac{1}{2}\inf_{g\in \mathcal{G}} \lambda_{min}(P[Z(g)Z(g)^\top]).\label{eq-lem7.2-2}
\end{equation}
\end{lemma}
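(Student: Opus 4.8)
The plan is to derive this lemma — which is exactly the fact invoked in the proof of Proposition \ref{prop:mainprop} to guarantee that the sample Gram matrices are invertible with probability tending to one — by combining the \emph{uniform} eigenvalue lower bound of Assumption \ref{assumption:main}\,iv) with the sample-to-population concentration already established in Lemma \ref{lem7.1}, and a standard eigenvalue-perturbation (Weyl) inequality. Write $c_0 := \inf_{g\in\mathcal G}\lambda_{\min}\big(P[Z(g)Z(g)^\top]\big) > 0$. The first, and really the only conceptual, point is to handle the data-dependence of $\widehat g_k$: although $\widehat g_k$ is random, conditionally on the observations in $I_k^c$ it is a \emph{fixed} element of $\mathcal G$ (by Assumption \ref{assumption:main}\,i)), so $\lambda_{\min}\big(P[\widehat Z_k\widehat Z_k^\top]\big)\ge c_0$ for every such realization, hence unconditionally with probability one. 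Thus the random ``population'' matrix $P[\widehat Z_k\widehat Z_k^\top]$ still enjoys the uniform lower bound $c_0$.

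Next I would invoke the per-split bound established inside the proof of Lemma \ref{lem7.1}, namely $\big\|\tfrac1n\sum_{i\in I_k}\widehat Z_{k,i}\widehat Z_{k,i}^\top - P[\widehat Z_k\widehat Z_k^\top]\big\| = O_p(1/\sqrt n) = o_p(1)$ for each fixed $k$, and apply Weyl's inequality $|\lambda_{\min}(A)-\lambda_{\min}(B)|\le\|A-B\|$ for symmetric matrices:
\[
\lambda_{\min}\Big(\tfrac1n\!\sum_{i\in I_k}\widehat Z_{k,i}\widehat Z_{k,i}^\top\Big) \;\ge\; \lambda_{\min}\big(P[\widehat Z_k\widehat Z_k^\top]\big) - \Big\|\tfrac1n\!\sum_{i\in I_k}\widehat Z_{k,i}\widehat Z_{k,i}^\top - P[\widehat Z_k\widehat Z_k^\top]\Big\| \;\ge\; c_0 - o_p(1).
\]
Hence $\lambda_{\min}\big(\tfrac1n\sum_{i\in I_k}\widehat Z_{k,i}\widehat Z_{k,i}^\top\big)\ge c_0/2$ with probability tending to $1$ for each $k$; since $K$ is a fixed finite number, a union bound over $k=1,\dots,K$ shows the intersection of these events still has probability tending to $1$, which is \eqref{eq-lem7.2-1}.

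For \eqref{eq-lem7.2-2} I would write $\tfrac1N\sum_{i=1}^N\widehat Z_i\widehat Z_i^\top = \tfrac1K\sum_k\big(\tfrac1n\sum_{i\in I_k}\widehat Z_{k,i}\widehat Z_{k,i}^\top\big)$ and use superadditivity of $\lambda_{\min}$ on symmetric matrices, $\lambda_{\min}\big(\tfrac1K\sum_k M_k\big)\ge\tfrac1K\sum_k\lambda_{\min}(M_k)$, so that on the event of \eqref{eq-lem7.2-1} the left-hand side of \eqref{eq-lem7.2-2} is at least $c_0/2$; alternatively one can apply Lemma \ref{lem7.1} and Weyl directly to the pooled matrix, noting $\lambda_{\min}\big(\tfrac1K\sum_k P[\widehat Z_k\widehat Z_k^\top]\big)\ge c_0$. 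There is no genuinely hard step: the one subtlety — the randomness of $\widehat g_k$ making $P[\widehat Z_k\widehat Z_k^\top]$ itself random — is dealt with in the first paragraph by the uniformity of Assumption \ref{assumption:main}\,iv), and everything after that is routine perturbation theory layered on the concentration bound of Lemma \ref{lem7.1}.
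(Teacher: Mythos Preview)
Your proposal is correct and follows essentially the same route as the paper: use Assumption~\ref{assumption:main}\,iv) to lower bound $\lambda_{\min}\big(P[\widehat Z_k\widehat Z_k^\top]\big)$ by $c_0$, invoke the per-split $O_p(1/\sqrt n)$ concentration from Lemma~\ref{lem7.1}, and apply Weyl's inequality. Your additional remarks on the union bound over $k$ and the superadditivity argument for \eqref{eq-lem7.2-2} are fine elaborations the paper leaves implicit (it simply says \eqref{eq-lem7.2-2} ``can be proved in a similar way'').
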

\begin{proof}
According to Weyl's inequality, 
\begin{align*}
&\lambda_{\min} \bigg(\frac{1}{n}\sum_{i\in I_k} \widehat Z_{k, i}\widehat Z_{k, i}^\top\bigg)\geq 
\lambda_{\min} \big(P (\widehat Z_{k}\widehat Z_{k}^\top)\big) - \bigg\|\frac{1}{n}\sum_{j\in I_k}\widehat Z_{k, j}\widehat Z_{k, j}^\top - P(\widehat Z_k\widehat Z_k^\top)\bigg\|\nonumber\\
&\quad \geq \inf_{g\in \mathcal{G}} \lambda_{min}(P[Z(g)Z(g)^\top]) - \bigg\|\frac{1}{n}\sum_{j\in I_k}\widehat Z_{k, j}\widehat Z_{k, j}^\top - P(\widehat Z_k\widehat Z_k^\top)\bigg\|.
\end{align*}
On the other hand, from the proof of Lemma \ref{lem7.1} we know  
$$
\bigg\|\frac{1}{n}\sum_{j\in I_k}\widehat Z_{k, j}\widehat Z_{k, j}^\top - P(\widehat Z_k\widehat Z_k^\top)\bigg\| = O_p(1/\sqrt{n}).
$$
This implies that 
$$
\lim_{n\rightarrow \infty} P\bigg( \bigg\|\frac{1}{n}\sum_{j\in I_k}\widehat Z_{k, j}\widehat Z_{k, j}^\top - P(\widehat Z_k\widehat Z_k^\top)\bigg\|\geq \frac{1}{2}\inf_{g\in \mathcal{G}} \lambda_{min}(P[Z(g)Z(g)^\top])\bigg) = 0.
$$
Combining the above, we obtain \eqref{eq-lem7.2-1}. \eqref{eq-lem7.2-2} can be proved in a similar way.
\end{proof}

\begin{lemma} \label{lem:inv}
Let $\{M_{1n}\}, \{M_{2n}\}, \{M_{3n}\}, \{M_{4n}\}, \{A_n\}, \{B_n\}$ be sequences of random real symmetric matrices of fixed dimension. Assume that with probability 1, $\lambda_0:=\inf_n\lambda_{\min}(B_n) > 0$, and $\|A_n - B_n\| = o_p(1)$. Moreover, assume that 
\begin{align*}
    &\|M_{1n} - A_n\| = O_p(1/\sqrt{n}), \|M_{3n} - A_n\| = O_p(1/\sqrt{n}), \\
    &\|M_{2n} - B_n\| = O_p(1/\sqrt{n}), \|M_{4n} - B_n\| = O_p(1/\sqrt{n}).
\end{align*}
If in addition, 
$$\sqrt{n} \| M_{1n} + M_{2n} - M_{3n} - M_{4n} \| \rightarrow_p 0,$$ 
then $$\sqrt{n} \| M_{1n}^{-1} + M_{2n}^{-1} - M_{3n}^{-1} - M_{4n}^{-1}\| \rightarrow_p 0.$$
\end{lemma}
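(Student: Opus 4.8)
The plan is to push the matrix inversion through the differences with the resolvent identity $X^{-1}-Y^{-1}=X^{-1}(Y-X)Y^{-1}$, replace every ``outer'' inverse factor by the single matrix $B_n^{-1}$, and then read off the conclusion from the hypothesis $\sqrt{n}\,\|M_{1n}+M_{2n}-M_{3n}-M_{4n}\|\rightarrow_p 0$. The preliminary bookkeeping is to check invertibility and uniform control of the inverses. By Weyl's inequality $\lambda_{\min}(A_n)\ge\lambda_{\min}(B_n)-\|A_n-B_n\|\ge\lambda_0-o_p(1)$, and then $\lambda_{\min}(M_{jn})\ge\lambda_{\min}(A_n)-\|M_{jn}-A_n\|$ for $j=1,3$ (with $B_n$ in place of $A_n$ for $j=2,4$), so all four matrices have smallest eigenvalue at least $\lambda_0/2$ with probability tending to one; on that event every inverse exists with $\|M_{jn}^{-1}\|\le 2/\lambda_0$, while $\|B_n^{-1}\|\le\lambda_0^{-1}$ holds with probability one. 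Everything below is understood on this event, which is enough for convergence in probability.

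Next, write $M_{1n}^{-1}+M_{2n}^{-1}-M_{3n}^{-1}-M_{4n}^{-1}=(M_{1n}^{-1}-M_{3n}^{-1})+(M_{2n}^{-1}-M_{4n}^{-1})$ and apply the resolvent identity to each pair, so that with $E_1:=M_{3n}-M_{1n}$ and $E_2:=M_{4n}-M_{2n}$ this equals $M_{1n}^{-1}E_1M_{3n}^{-1}+M_{2n}^{-1}E_2M_{4n}^{-1}$, where $\|E_1\|,\|E_2\|=O_p(1/\sqrt{n})$ by the triangle inequality and the assumed rates. The obstacle is that one cannot bound these two terms separately --- each is only $O_p(1/\sqrt{n})$, not $o_p(1/\sqrt{n})$ --- so the cancellation $E_1+E_2=-(M_{1n}+M_{2n}-M_{3n}-M_{4n})$ must be exposed. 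To do so I would replace each outer factor by $B_n^{-1}$: since $\|M_{jn}-B_n\|\le\|M_{jn}-A_n\|+\|A_n-B_n\|=o_p(1)$ for $j=1,3$ (and directly $o_p(1)$ for $j=2,4$), the resolvent identity again gives $\|M_{jn}^{-1}-B_n^{-1}\|=\|M_{jn}^{-1}(B_n-M_{jn})B_n^{-1}\|=o_p(1)$. Hence $M_{1n}^{-1}E_1M_{3n}^{-1}=B_n^{-1}E_1B_n^{-1}+r_1$ with $r_1=(M_{1n}^{-1}-B_n^{-1})E_1M_{3n}^{-1}+B_n^{-1}E_1(M_{3n}^{-1}-B_n^{-1})$, and $\|r_1\|=o_p(1)\cdot O_p(1/\sqrt{n})\cdot O_p(1)=o_p(1/\sqrt{n})$; likewise for the second term.

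Adding the two pieces yields $M_{1n}^{-1}+M_{2n}^{-1}-M_{3n}^{-1}-M_{4n}^{-1}=B_n^{-1}(E_1+E_2)B_n^{-1}+o_p(1/\sqrt{n})$, and since $\|B_n^{-1}(E_1+E_2)B_n^{-1}\|\le\lambda_0^{-2}\,\|M_{1n}+M_{2n}-M_{3n}-M_{4n}\|=o_p(1/\sqrt{n})$ by hypothesis, multiplying through by $\sqrt{n}$ gives the claim. The only delicate point is verifying that the replacement remainders are genuinely $o_p(1/\sqrt{n})$ rather than merely $O_p(1/\sqrt{n})$; this is precisely where the $o_p(1)$ control on $\|A_n-B_n\|$ (which need not come with a rate) is used, paired against the $O_p(1/\sqrt{n})$ size of $E_1,E_2$.
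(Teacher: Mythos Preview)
Your proof is correct. The invertibility setup via Weyl's inequality, the two applications of the resolvent identity, the replacement of the outer factors by $B_n^{-1}$, and the final cancellation $E_1+E_2=-(M_{1n}+M_{2n}-M_{3n}-M_{4n})$ all go through as you describe; the bookkeeping $o_p(1)\cdot O_p(1/\sqrt n)\cdot O_p(1)=o_p(1/\sqrt n)$ is exactly right and is indeed where the rate-free hypothesis $\|A_n-B_n\|=o_p(1)$ is spent.

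The paper takes a different route. Instead of the resolvent identity, it expands each $M_{jn}^{-1}$ to second order by a Neumann series, $M_{jn}^{-1}=A_n^{-1}-A_n^{-1}(M_{jn}-A_n)A_n^{-1}+A_n^{-1/2}D_{jn}A_n^{-1/2}$ for $j=1,3$ (and similarly around $B_n$ for $j=2,4$), with the tail $D_{jn}$ controlled by a geometric-series bound. Summing the four expansions produces a decomposition into four pieces $J_{1n},\dots,J_{4n}$: one carrying the main cancellation $A_n^{-1}(M_{1n}+M_{2n}-M_{3n}-M_{4n})A_n^{-1}$, one correcting for the use of $A_n$ versus $B_n$ on the $j=2,4$ terms, and two Neumann-tail remainders. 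Your argument is shorter and more elementary: by pairing $(M_{1n}^{-1}-M_{3n}^{-1})$ and $(M_{2n}^{-1}-M_{4n}^{-1})$ first and then pushing all outer factors to a \emph{single} reference matrix $B_n^{-1}$, you avoid both the explicit Neumann tail bounds and the separate $A_n$-versus-$B_n$ correction term. The paper's approach, on the other hand, makes the second-order nature of the error completely explicit and would adapt more readily if one wanted a quantitative rate rather than just $o_p(1/\sqrt n)$.
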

\begin{proof}
Define the event 
\begin{align*}
E_n := &\left\{ \|A_n-B_n\| \geq \lambda_0/2\right\}\cup \left\{ \max\{\|M_{1n}-A_n\|, \|M_{3n}-A_n\|\} \geq\lambda_0/2\right\}\\& \cup \left\{ \max\{\|M_{2n}-B_n\|, \|M_{4n}-B_n\|\} \geq\lambda_0/2\right\}.
\end{align*}
Then $\lim_{n\rightarrow \infty} P(E_n) = 0$.
Now on $E_n^c$, according to a Neumann series expansion,
\begin{align*}
M_{1n}^{-1}& = [A_n+(M_{1n} - A_n)]^{-1}\\
& = A_n^{-1/2}[I - A_n^{-1/2}(M_{1n} - A_n)A_n^{-1/2} + D_{1n}]A_n^{-1/2}.
\end{align*}
Here $D_{1n} = \sum_{j\geq 2}[- A_n^{-1/2}(M_{1n} - A_n)A_n^{-1/2}]^j$, and we have on $E_n^c$
\begin{align}
    \|D_{1n}\|&\leq \sum_{j\geq 2}\|A_n^{-1/2}(M_{1n} - A_n)A_n^{-1/2}\|^j\nonumber\\
    &\leq \frac{\|A_n^{-1}\|^2\|M_{1n} - A_n\|^2}{1 - \|A_n^{-1}\|\|M_{1n} - A_n\|}\leq \frac{8}{\lambda_0^2}\|M_{1n} - A_n\|^2.\label{eq-lemma-D}
\end{align}
Here we use the fact that on $E_n^c$
$$
\|A_n^{-1/2}(M_{1n} - A_n)A_n^{-1/2}\|\leq \|A_n^{-1/2}\|^2\|M_{1n} - A_n\|<\frac{2}{\lambda_0}\cdot \frac{\lambda_0}{2}=1.
$$
Similar expansions hold for $M_{2n}$, $M_{3n}$ and $M_{4n}$, and we define $D_{2n}$, $D_{3n}$ and $D_{4n}$ accordingly. Using some simple algebra, we deduce that on $E_n^c$, 
$$
M_{1n}^{-1} + M_{2n}^{-1} - M_{3n}^{-1} - M_{4n}^{-1} = J_{1n} + J_{2n} + J_{3n} + J_{4n},
$$
where
\begin{align*}
    J_{1n} & = -A_n^{-1}[M_{1n} + M_{2n} - M_{3n} - M_{4n}]A_n^{-1},\\
    J_{2n} & = -A_n^{-1}(M_{4n} - M_{2n})A_n^{-1} + B_n^{-1}(M_{4n} - M_{2n})B_n^{-1},\\
    J_{3n} & = A_n^{-1/2} (D_{1n} - D_{3n}) A_n^{-1/2},\\
    J_{4n} &= B_n^{-1/2} (D_{2n} - D_{4n}) B_n^{-1/2}.
\end{align*}
For any $\epsilon >0$, 
\begin{equation}\label{eq-lemma-main}
    P(\sqrt{n}\|M_{1n}^{-1} + M_{2n}^{-1} - M_{3n}^{-1} - M_{4n}^{-1}\|>\epsilon) < P(E_n) + \sum_{\ell=1}^4 P(E_n^c\cap \{\sqrt{n}\|J_{\ell n}\|>\epsilon / 4\}).
\end{equation}
Combining the fact that $\lim_{n\rightarrow \infty} P(E_n) = 0$, we only need to prove that each of the rest of the terms on the the RHS of (\ref{eq-lemma-main}) has limit 0. 

First, $\lim_{n\rightarrow \infty}P(E_n^c\cap \{\sqrt{n}\|J_{1n}\|>\epsilon / 4\}) = 0$ follows from our assumption. For $J_{2n}$, observe that 
$
J_{2n} = J_{2n}^{(1)} + J_{2n}^{(2)}, 
$
where 
$$
J_{2n}^{(1)} = (B_n^{-1}-A_n^{-1})(M_{4n} - M_{2n})A_n^{-1},
J_{2n}^{(2)} = B_n^{-1}(M_{4n} - M_{2n})(B_n^{-1}-A_n^{-1}).
$$
We bound the limit of $\|J_{2n}^{(1)}\|$ as follows: For any $\delta > 0$, there exists $M >0$ such that $\forall n$, $P(\sqrt{n}\|M_{4n} - M_{2n}\|>M) < \frac{\delta}{2}$. According to our assumption, there further exists $N\in \mathbb N$ such that for all $n>N$, $P(\|A_n - B_n\|>\frac{\lambda_0^3\epsilon}{32M})<\frac{\delta}{2}$. Therefore for all $n>N$,
\begin{align*}
&P(E_n^c\cap\{ \sqrt{n}\|J_{2n}^{(1)}\|>\epsilon/8\})\\
\leq&P(E_n^c\cap \{\sqrt{n}\|A_n^{-1}(A_n-B_n)B_n^{-1}(M_{4n} - M_{2n})A_n^{-1}\|>\epsilon/8\})\\
\leq & P(E_n^c\cap \{\|A_n-B_n\|\cdot \sqrt{n}\|M_{4n} - M_{2n}\|>\lambda_0^3\epsilon/32\})\\
\leq & P(\sqrt{n}\|M_{4n} - M_{2n}\|>M) + P(\|A_n - B_n\|>\lambda_0^3\epsilon/(32M))<\delta.
\end{align*}
The above argument implies that $\lim_{n\rightarrow +\infty} P(E_n^c\cap\{ \sqrt{n}\|J_{2n}^{(1)}\|>\epsilon/8\}) = 0$. Similarly we have $\lim_{n\rightarrow +\infty} P(E_n^c\cap\{ \sqrt{n}\|J_{2n}^{(2)}\|>\epsilon/8\}) = 0$. Thus
\begin{align*}
&\lim_{n\rightarrow +\infty} P(E_n^c\cap\{ \sqrt{n}\|J_{2n}\|>\epsilon/4\})\\
\leq& \lim_{n\rightarrow +\infty} P(E_n^c\cap\{ \sqrt{n}\|J_{2n}^{(1)}\|>\epsilon/8\}) + \lim_{n\rightarrow +\infty} P(E_n^c\cap\{ \sqrt{n}\|J_{2n}^{(2)}\|>\epsilon/8\}) = 0.
\end{align*}

Now we proceed to bound the limit of $\|J_{3n}\|$. In fact we have
\begin{align*}
& P(E_n^c\cap\{ \sqrt{n}\|J_{3n}\|>\epsilon/4\})
\leq P(E_n^c\cap\{\sqrt{n}\|D_{1n} - D_{3n}\|>\epsilon\lambda_0 / 8\})\\
\leq& P(E_n^c\cap\{\sqrt{n}\|D_{1n}\|>\epsilon\lambda_0 / 16\}) + P(E_n^c\cap\{\sqrt{n}\|D_{3n}\|>\epsilon\lambda_0 / 16\})\\
\leq & P(\sqrt{n}\|M_{1n} - A_n\|^2>\epsilon\lambda_0^3/128) + P(\sqrt{n} \|M_{3n} - A_n\|^2>\epsilon\lambda_0^3/128).
\end{align*}
In the last inequality we utilize (\ref{eq-lemma-D}). Combining our assumptions, we have 
$$
\lim_{n\rightarrow \infty}P(E_n^c\cap\{ \sqrt{n}\|J_{3n}\|>\epsilon/4\}) = 0.
$$
Similarly
$$
\lim_{n\rightarrow \infty}P(E_n^c\cap\{ \sqrt{n}\|J_{4n}\|>\epsilon/4\}) = 0.
$$
We conclude our proof.
\end{proof}

\newpage
\bibliographystyle{ims}
\bibliography{references}
\end{document}